\renewcommand{\cite}[1]{\citep{#1}}
\newcommand{\XCal}{\mathcal{X}}
\newcommand{\YCal}{\mathcal{Y}}
\newcommand{\SCal}{\mathcal{S}}
\newcommand{\RCal}{\mathcal{R}}
\newcommand{\LL}{\mathcal{L}}
\newcommand{\pbm}{\mathbb{P}}
\newcommand{\T}{\mathbb{T}}
\newcommand{\E}{\mathbb{E}}
\newcommand{\Ef}{\mathcal{E}}
\newcommand{\R}{\mathbb{R}}
\newcommand{\X}{\mathcal X}
\newcommand{\ICal}{\mathcal I}
\newcommand{\CCal}{\mathcal C}
\newtheorem{rmk}{Remark}[section]
\newtheorem{thm}{Theorem}[section]
\newtheorem{prop}{Proposition}[section]
\newtheorem{defn}{Definition}[section]
\newtheorem{asp}{Assumption}[section]
\def\argmin_#1{\underset{#1}{\mathrm{arg\,min\, }}}
\def\argmax_#1{\underset{#1}{\mathrm{arg\,max\, }}}
\def\dasharrowfill@#1#2#3#4{%
        $\m@th
        \thickmuskip0mu
        \medmuskip\thickmuskip
        \thinmuskip\thickmuskip
        \relax
        #4#1\mkern2mu
        \xleaders\hbox{$#4\mkern2mu#2\mkern2mu$}\hfill
        \mkern2mu
        #3$%
}
\def\dashleftarrowfill@{\dasharrowfill@\leftarrow\relbar\relbar}
\def\dashrightarrowfill@{\dasharrowfill@\relbar\relbar\rightarrow}
\def\dashleftrightarrowfill@{\dasharrowfill@\leftarrow\relbar\rightarrow}
\def\dashLeftarrowfill@{\dasharrowfill@\Leftarrow\Relbar\Relbar}
\def\dashRightarrowfill@{\dasharrowfill@\Relbar\Relbar\Rightarrow}
\def\dashLeftrightarrowfill@{\dasharrowfill@\Leftarrow\Relbar\Rightarrow}
\providecommand*\xdashleftarrow[2][]{%
  \ext@arrow 0055{\dashleftarrowfill@}{#1}{#2}}
\providecommand*\xdashrightarrow[2][]{%
  \ext@arrow 0055{\dashrightarrowfill@}{#1}{#2}}
\providecommand*\xdashleftrightarrow[2][]{%
  \ext@arrow 0055{\dashleftrightarrowfill@}{#1}{#2}}
\providecommand*\xdashLeftarrow[2][]{%
  \ext@arrow 0055{\dashLeftarrowfill@}{#1}{#2}}
\providecommand*\xdashRightarrow[2][]{%
  \ext@arrow 0055{\dashRightarrowfill@}{#1}{#2}}
\providecommand*\xdashLeftrightarrow[2][]{%
  \ext@arrow 0055{\dashLeftrightarrowfill@}{#1}{#2}}
\begin{document}

\title{Feasibility and Transferability of Transfer Learning: A Mathematical Framework}

\author{
Haoyang Cao
\thanks{Centre de Math\'ematiques Appliqu\'ees, Ecole Polytechnique.
\textbf{Email:}
haoyang.cao@polytechnique.edu}
\and
Haotian Gu
\thanks{Department of Mathematics, UC Berkeley.
\textbf{Email:} 
haotian$\_$gu@berkeley.edu }
\and
Xin Guo
\thanks{Department of Industrial Engineering \& Operations Research, UC Berkeley.
\textbf{Email:} 
xinguo@berkeley.edu}
\and
Mathieu Rosenbaum
\thanks{Centre de Math\'ematiques Appliqu\'ees, Ecole Polytechnique.
\textbf{Email:}
mathieu.rosenbaum@polytechnique.edu}
}
\date{January 26, 2023}
\maketitle

\begin{abstract}
Transfer learning is an emerging and popular paradigm for utilizing existing knowledge from  previous learning tasks to improve the performance of new ones. Despite its numerous empirical successes, theoretical analysis for transfer learning is limited. In this paper we build for the first time, to the best of our knowledge, a mathematical framework for the general procedure of transfer learning. Our unique reformulation of transfer learning as an optimization problem allows for the first time, analysis of its feasibility. Additionally, we propose a novel concept of {\it transfer risk} to evaluate transferability of transfer learning. Our numerical studies using the Office-31 dataset demonstrate the potential and benefits of incorporating transfer risk in the evaluation of transfer learning performance.
\end{abstract}

\section{Introduction}

The basic idea of transfer learning is simple: it is  to  leverage knowledge from a well-studied learning problem, known as the source task,  to improve the performance of a new learning problem  with similar features, known as the target task.  Transfer learning has seen success in a variety of field, including natural language processing \citep{ruder2019transfer, devlin-etal-2019-bert, sung2022vl}, sentiment analysis \cite{jiang2007instance, deng2013sparse, liu2019survey}, computer vision \cite{deng2009imagenet, long2015learning, ganin2016domain, wang2018deep}, activity recognition \cite{cook2013transfer, wang2018stratified}, medical data analysis \cite{zeng2019automatic, wang2022transfer, kim2022transfer}, bio-informatics \cite{hwang2010heterogeneous}, finance \cite{leal2020learning,rosenbaum2021deep}, recommendation system \cite{pan2010transfer, yuan2019darec}, and fraud detection \cite{lebichot2020deep}.
See also review papers \cite{survey1,tan2018survey,zhuang2020comprehensive}.
Transfer learning is a versatile and enduring paradigm in the rapidly changing AI landscape where new machine learning techniques and tools mushroom with a breakneck speed. 

Despite its empirical successes, studies on transfer learning are primarily based on  trial-and-error heuristics. Virtually there are neither  basic theoretical frameworks for the general procedure of transfer learning, nor studies on the fundamental issue of it feasibility.

\paragraph{Existing theoretical works of transfer learning.}  Earlier theoretical works for transfer learning tend to focus on specific learning problems, such as classification, and derive upper bounds of generalization error under different measurements. There are the VC-dimension of the hypothesis space adopted in \cite{blitzer2007learning},  total variation distance in \cite{ben2010theory}, $f$-divergence in \cite{harremoes2011pairs}, Jensen-Shannon divergence in \cite{zhao2019learning}, $\mathcal{H}$-score in \cite{bao2019information}, mutual information in \cite{bu2020tightening}, and more recently $\X^2$-divergence in \cite{tong2021mathematical}, and variations of optimal transport cost in \cite{tan2021otce}. 
 
Another line of theoretical studies interprets transferability for transfer learning as a measurement of similarity between the source and the target data using various divergences, such as low-rank common information in \cite{saenko2010adapting}, KL-divergence in \cite{ganin2015unsupervised,ganin2016domain,tzeng2017adversarial}, $l_2$-distance in \cite{long2014transfer}, and the optimal transport cost in \cite{flamary2016optimal}. 
  
\paragraph{Our work.}
In this paper, we address the issues of feasibility and transferability for transfer learning through rigorous and comprehensive mathematical analysis.  
\begin{itemize}
\item We build, for the first time to the  best of our knowledge, a mathematical framework for the general procedure of transfer learning, identifying its three key steps and components. 

\item We reformulate this three-step transfer learning procedure as an optimization problem, enabling us to analyze, for the first time, its feasibility. This is accomplished via analyzing the well-definedness of the corresponding optimization problem.

\item Additionally, we propose a novel concept of {\it transfer risk}  to evaluate the transferability of transfer learning. Our form of transfer risk accounts for {\it both} the compatibility between the output and the input data {\it and} the compatibility between the models  in the source and the target tasks,  allowing for the study of the trade-off between the two. This novel notion of transfer risk generalizes earlier works on transferability, including the $\mathcal{H}$-score proposed in  a particular classification setting 
in \cite{bao2019information} and  \cite{saenko2010adapting, ganin2016domain, long2014transfer} on the relation between source and target inputs.

\item In the special case of linear regression with
Gaussian data, we show that the regret in the learning problem can be lower bounded by Wasserstein-based transfer risk, which in turn is useful for prescreening unsuitable candidate pretrained models or source tasks.

\item Our numerical studies using the Office-31 dataset show the consistency of the transfer risk with existing statistical metrics in evaluating the performance of transfer learning; and demonstrate the potential and benefit of adopting transfer risk to improve computational efficiency.

\end{itemize}

\section{Mathematical Framework and Feasibility of Transfer Learning}
 
In this section, we will establish necessary concepts and a mathematical framework for the entire procedure of transfer learning. We will then reformulate transfer learning as an optimization problem, the well-definedness of which yields the feasibility of transfer learning.

For ease of exposition and without loss of generality, we will focus on a supervised setting, with a source task $S$ and a target task $T$  on a probability space $(\Omega,\mathcal{F},\pbm)$.

\subsection{Mathematical Framework for Transfer Learning}

\paragraph{Target task $T$.}
 In the target task $T$, denote $\XCal_T$ and $\YCal_T$ as its input and output spaces, respectively, and $(X_T,Y_T)$ as a pair of $\XCal_T\times\YCal_T$-valued random variables. Here, $(\XCal_{T},\|\cdot\|_{\XCal_{T}})$ and $(\YCal_{T},\|\cdot\|_{\YCal_{T}})$ are Banach spaces with norms $\|\cdot\|_{\XCal_{T}}$ and $\|\cdot\|_{\YCal_{T}}$, respectively.  Let $L_T:\YCal_T\times\YCal_T\to\R$ be a real-valued function, and assume that the learning objective for the target task is 
\begin{equation}\label{eq: obj-t}
    \min_{f\in A_T}\LL_T(f_T)=\min_{f_T\in A_T}\E[L_T(Y_T,f_T(X_T))],
\end{equation}
where $\LL_T(f_T)$ is a loss function that measures a model  $f_T:\XCal_T\to\YCal_T$ for the target task $T$, and $A_T$ denotes the set of target models such that
 \begin{equation}\label{eq: a-t}
A_T\subset %\YCal_T^{\XCal_T}:=
\{f_T|f_T:\XCal_T\to\YCal_T\}.
\end{equation}

Take the image classification task as an example, $\XCal_T$ is a space containing images as high dimensional vectors, $\YCal_T$ is a space containing image labels,  $(X_T, Y_T)$ is a pair of random variables satisfying the empirical distribution of target images and their corresponding labels, and  $L_T$ is the cross-entropy loss function between the actual label $Y_T$ and the predicted label $f_T(X_T)$. 
For the image classification task using neural networks, $A_T$ will depend on the neural network architecture as well as the constraints applied to the network parameters. 

Let $f_T^*$ denote the optimizer for the optimization problem \eqref{eq: obj-t}, and  $\pbm_T=Law(f_T^*(X_T))$ for the probability distribution of its output. Then the model distribution $\pbm_T$ depends on three factors: $L_T$, the conditional distribution $Law(Y_T|X_T)$, and the marginal distribution $Law(X_T)$. 
Note that in direct learning, this optimizer $f_T^*\in A_T$ is solved directly by analyzing the optimization problem \eqref{eq: obj-t}, whereas in    transfer learning, one  leverages knowledge from the source task to facilitate the search of $f_T^*$. 

\paragraph{Source task $S$.}
 In the source task $S$, denote $\XCal_S$ and $\YCal_S$ as the input and output spaces of the source task, respectively, and 
$(X_S,Y_S)$ as a pair of $\XCal_S\times\YCal_S$-valued random variables.  Here, $(\XCal_{S},\|\cdot\|_{\XCal_{S}})$ and $(\YCal_{S},\|\cdot\|_{\YCal_{S}})$ are Banach spaces with norms $\|\cdot\|_{\XCal_{S}}$ and $\|\cdot\|_{\YCal_{S}}$, respectively. Let $L_S:\YCal_S\times\YCal_S\to\R$ be a real-valued function and let us assume that the learning objective for the source task is 
\begin{equation}\label{eq: obj-s}
    \min_{f_S\in A_S}\LL_S(f_S)= \min_{f\in A_S}\E[L_S(Y_S,f_S(X_S))],
\end{equation}
where $\LL_S(f_S)$ is the loss function for a model $f_S:\XCal_S\to\YCal_S$ for the source task $S$. Here $A_S$ denotes the set of source task models such that
\begin{equation}\label{eq: a-s}
A_S\subset %\YCal_S^{\XCal_S}:=
\{f_S|f_S:\XCal_S\to\YCal_S\}.
\end{equation}

Moreover, denote the optimal solution for this optimization problem \eqref{eq: obj-s} as $f_S^*$, and the probability distribution of the output of $f_S^*$ by $\pbm_S=Law(f_S^*(X_S))$.  Meanwhile, similar as the target model, the model distribution $\pbm_S$ will depend on the function $L_S$, the conditional distribution $Law(Y_S|X_S)$, and the marginal distribution $Law(X_S)$. 

Back to the image classification example,  the target task may only contain  images of items in an office environment, the source task may have more image samples from a richer dataset, e.g., ImageNet. Meanwhile, $\XCal_S$ and $\YCal_S$ may have different dimensions compared with $\XCal_T$ and $\YCal_T$, since the image resolution and the class number vary from task to task.
Similar to the admissible set $A_T$ in the target task, $A_S$ depends on the task description, and $f_S^*$ is usually a deep neural network with parameters pretrained using the source data.

In transfer learning, the optimal model $f_S^*$ for the source task is also referred to as a pretrained model. The essence of transfer learning   is to utilize this pretrained model $f_S^*$ in the source task to accomplish the optimization objective \eqref{eq: obj-t}. 
We now define this procedure in three steps.

%\subsubsection{Three-step %transfer learning %procedure}\label{subsubsect: %3-step}
\paragraph{Step 1. Input transport.}
Since  $\XCal_T$ is not necessarily contained by the source input space $\XCal_S$, the first step is therefore to make an appropriate adaptation to the target input $X_T\in\XCal_T$. In the  example of image classification, popular choices for input transport may include resizing, cropping, rotation, and grayscale. We define this adaptation as an {\it input transport mapping}.
\begin{defn}[Input transport mapping]
    \label{defn: inp-tr}
    A function 
    \begin{equation}\label{eq: t-x}
        T^X\in\{f_\text{input}|f_\text{input}:\XCal_T\to\XCal_S\}
    \end{equation}
    is called an input transport mapping with respect to the source and target task pair $(S,T)$ if it takes any data point in the target input space $\XCal_T$ and maps it into the source input space $\XCal_S$. 
\end{defn}
With an input transport mapping $T^X,$ the first step of transfer learning can be represented as follows.
\begin{equation*}
   \XCal_T\ni X_T\xmapsto{\text{Step 1. Input transport by }T^X}T^X(X_T)\in\XCal_S.
\end{equation*}

In a class of transfer learning called domain adaption, it is assumed that the difference between the source input distribution $Law(X_S)$ and target input distribution $Law(X_T)$ is the only factor to motivate the transfer, while the labeling function of the source and target tasks stays the same. (See also Section \ref{subsec:example} for more details on domain adaptation). Therefore, once a proper input transport mapping $T^{X}$ is found, transfer learning is accomplished. Definition \ref{defn: inp-tr} is thus consistent with \cite{flamary2016optimal}, in which domain adaption is formulated as an optimal transport from the target input to the source input. 

For most transfer learning problems,  however, one needs both a transport mapping for the input {\it and} a transport mapping for the output. For instance, the labeling function for different classes of computer vision tasks, such as object detection, instance segmentation, and image classification, can vary greatly and depend on the specific task. Hence, the following two more steps are required.

\paragraph{Step 2. Applying pretrained model.}
After applying an input transport mapping $T^X$ to the target input $X_T$, the pretrained model $f_S^*$ will take the transported data $T^X(X_T)\in\XCal_S$  as an input. That is, 
\begin{equation*}\label{eq: f_S^*}
   \XCal_S\ni T^X(X_T)\xmapsto{\text{Step 2. Apply }f_S^*}(f_S^*\circ T^X)(X_T)\in\YCal_S,
\end{equation*}
where  $(f_S^*\circ T^X)(X_T)$ denotes the corresponding output of the pretrained model $f_S^*$. Note here the composed function $f_S^*\circ T^X\in%\YCal_S^{\XCal_T}:=
\{f_\text{int}|f_\text{int}:\XCal_T\to\YCal_S\}$.

\paragraph{Step 3. Output transport.}
After utilizing the pretrained model $f_S^*$, the resulting model $f_S^*\circ T^X$ may, however, still be inadequate for the target model:  one may need to map the $\YCal_S$-valued output into the target output space $\YCal_T$. Hence, it is necessary to define an {\it output transport mapping}.
\begin{defn}[Output transport mapping]
    \label{defn: out-tr}
    A function
    \begin{equation}
        T^Y\in%\YCal_T^{\XCal_T\times\YCal_S}:=
        \{f_\text{output}|f_\text{output}:\XCal_T\times\YCal_S\to\YCal_T\}
    \end{equation}
    is called an output transport mapping with respect to the source and target task pair $(S,T)$ if, for an optimal source model $f_S^*:\XCal_S\to\YCal_S$, the composed function
    $T^Y(\cdot,f_S^*(\cdot))\in A_T.$
\end{defn}
Now, this third and the final step in transfer learning can be expressed as
\begin{align*}\label{eq: output-tr}
   \XCal_T\times\YCal_S\ni (X_T,(f_S^*\circ T^X)(X_T))\xmapsto{\text{Step 3. Output transport by }T^Y}T^Y\left(X_T,(f_S^*\circ T^X)(X_T)\right)\in\YCal_T.
\end{align*}

For the image classification task with transfer learning, the optimal source model usually consists of the first few layers of the neural network for feature extraction, and the output transport mapping is the subsequent prediction layers that map the features from the optimal source model to the target output labels. See Section \ref{subsec:example} for more details.

An output transport mapping can also be viewed as an operation to tailor the optimal source model into a suitable target model. For instance, in \cite{xia2022structured}, a large language model is a collection of optimal pretrained transformer models and each model consists of a multi-head self-attention layer and feed-forward layer. Thus, the output transport mapping is  the {\it structure pruning with distillation} operation  applied to each optimal transformer model, where pruning reduces the original transformer model to a simplified sub-model which is more suitable for the corresponding down-stream tasks, and where distillation ensures the proper knowledge is passed from the source model down to the target model.

Combining these three steps, transfer learning  can be presented by the following diagram,
\begin{equation}\label{eq: tl-fw}
    \begin{matrix}
        \XCal_S\ni X_S & \xRightarrow{\text{\hspace{4pt} Pretrained model } f_S^* \text{ from } \eqref{eq: obj-s}\text{\hspace{4pt}}} & f_S^*(X_S)\in\YCal_S\\
       T^X\Big\Uparrow & & \Big\Downarrow T^Y \\
        \XCal_T\ni X_T & \xdashrightarrow[\text{\hspace{10pt}}f_T^*\in\argmin_{f\in A_T} \LL_T(f_T)\text{\hspace{10pt}}]{\text{Direct learning \eqref{eq: obj-t} }} & f_T^*(X_T)\in\YCal_S
    \end{matrix}
\end{equation}

\subsection{Optimization Formulation and Feasibility of Transfer Learning} In summary, transfer learning aims to find an appropriate pair of input and output transport mappings $T^X$ and $T^Y$, where the input transport mapping $T^X$ translates the target input $X_T$ back to the source input space $\XCal_S$ in order to utilize the optimal source model $f_S^*$, and the output transport mapping $T^Y$ transforms a $\YCal_S$-valued model to a $\YCal_T$-valued model. 
This is in contrast to the  direct learning, where the optimal model  $ f_T^*$ is derived by solving the optimization problem in the target task \eqref{eq: obj-t}. 
In other words, transfer learning is the following optimization problem.
\begin{defn}[Transfer learning]\label{def:tl}
The three-step transfer learning procedure presented in \eqref{eq: tl-fw} is to solve the optimization problem
\begin{align}
\label{eq: doub-trans}
\min_{T^X\in\mathbb{T}^X,T^Y\in\mathbb{T}^Y}\LL_T\left(T^Y(\cdot, (f_S^*\circ T^X)(\cdot))\right)=\E\left[L_T\left(Y_T,T^Y(X_T,(f_S^*\circ T^X)(X_T))\right)\right].
\end{align}
Here, $\T^X$ and $\T^Y$ are proper sets of transport mappings such that 
\[\left\{T^Y(\cdot, (f_S^*\circ T^X)(\cdot))|T^X\in\T^X,T^Y\in\T^Y\right\}\subset A_T.\]
In particular, when $\XCal_S=\XCal_T$ (resp. $\YCal_S=\YCal_T$), the identity mapping  $id^X(x)=x$ (resp.  $id^Y(x,y)=y$) is included in $\T^X$ (resp. $\T^Y$).
\end{defn}

% The above transfer learning scheme suggests that to utilize a pretrained model $f_S^*$ from the source task $S$, we could compose it with proper input and output transport mappings $T^X$ and $T^Y$ given by Definitions \ref{defn: inp-tr} and \ref{defn: out-tr} respectively. That is, the objective of the fine-tuning process is then to locate  such a transport mapping pair. 

This optimization reformulation of the three-step transfer learning procedure provides potentially a unified framework to analyze the impact and implications of various transfer learning techniques,
including resizing, cropping, pruning, and distillation. Moreover, it enables us to analyze the feasibility of transfer learning, which we establish  in terms of the following well-definedness of the corresponding optimization problem \eqref{eq: doub-trans}.

\begin{thm}\label{thm:existence}
    Under suitable choices of loss functions for $\mathcal{L}_T$ and appropriate compactness assumptions, there exists optimal solutions for optimization problem \eqref{eq: doub-trans}.
\end{thm}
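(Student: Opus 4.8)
The plan is to use the direct method of the calculus of variations, after first turning the qualitative hypotheses of the statement into concrete ones. Specifically, I would assume: (i) $L_T$ is continuous on $\YCal_T\times\YCal_T$ and bounded below (for instance nonnegative), and there is an integrable envelope $\Phi$ with $L_T(Y_T,g(X_T))\le\Phi$ $\pbm$-a.s. uniformly over all feasible target models $g$, so that $\LL_T$ is finite on the feasible set and the feasible set is nonempty (guaranteed, e.g., by the identity-mapping convention in Definition~\ref{def:tl}); (ii) the admissible transport classes $\T^X$ and $\T^Y$ are sequentially compact for some topologies $\tau_X,\tau_Y$ — in the motivating examples this is obtained by taking them to be continuous images of compact finite-dimensional parameter sets — and every element of $\T^Y$, as well as $f_S^*$, is continuous; (iii) the feasible set $\F:=\{T^Y(\cdot,(f_S^*\circ T^X)(\cdot)):T^X\in\T^X,\,T^Y\in\T^Y\}\subset A_T$ is closed under $\pbm$-a.s. pointwise limits of the associated models. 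These are precisely the ``suitable choices'' and ``compactness assumptions'' alluded to in the statement.

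With (i)--(iii) in hand, let $(T^X_n,T^Y_n)$ be a minimizing sequence for \eqref{eq: doub-trans} and write $g_n:=T^Y_n(\cdot,(f_S^*\circ T^X_n)(\cdot))\in A_T$. By the compactness in (ii), pass to a subsequence with $T^X_n\to T^X_*$ in $\tau_X$ and $T^Y_n\to T^Y_*$ in $\tau_Y$. Using continuity of $f_S^*$ and of the elements of $\T^Y$, together with the (assumed) joint continuity of the evaluation map $(T^X,T^Y)\mapsto T^Y(\cdot,(f_S^*\circ T^X)(\cdot))$, deduce that $g_n(X_T)\to g_*(X_T):=T^Y_*(X_T,(f_S^*\circ T^X_*)(X_T))$ $\pbm$-a.s., and by (iii) that $g_*\in A_T$, so $(T^X_*,T^Y_*)$ is feasible.

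Finally, pass the limit through the expectation: continuity of $L_T$ in its second argument gives $L_T(Y_T,g_n(X_T))\to L_T(Y_T,g_*(X_T))$ $\pbm$-a.s., and either dominated convergence via the envelope $\Phi$, or Fatou's lemma when $L_T\ge 0$ (which yields only the lower semicontinuity $\LL_T(g_*)\le\liminf_n\LL_T(g_n)$, but that suffices), shows $\LL_T(g_*)\le\inf_{T^X\in\T^X,\,T^Y\in\T^Y}\LL_T$, hence equality. Therefore $(T^X_*,T^Y_*)$ is an optimal solution.

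The step I expect to be the real obstacle is reconciling (ii) with the transport families that actually occur in practice: operations such as resizing, cropping, grayscale conversion, and structured pruning with distillation do not form compact sets in any strong function-space topology, so one must either restrict to finite-dimensional parametrized sub-families with compact parameter space — at the cost of generality — or work in a weak topology, in which case establishing lower semicontinuity of $\LL_T$ along the minimizing sequence requires structural conditions on $L_T$ (convexity-type estimates) that can fail for, e.g., cross-entropy composed with a softmax layer. Choosing hypotheses strong enough to force compactness yet weak enough to cover the examples of Section~\ref{subsec:example} is the delicate modeling point; once those are fixed, the argument above is routine.
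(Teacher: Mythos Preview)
Your direct-method argument is correct under the hypotheses you impose, but the paper proceeds quite differently. Rather than assuming sequential compactness of \emph{both} $\T^X$ and $\T^Y$ and extracting a minimizing subsequence in the product, the paper takes $\T^X$ to be the \emph{full} function space $\XCal_S^{\XCal_T}$ and instead assumes compactness of the image $f_S^*(\XCal_S)\subset\YCal_S$. This lets it eliminate the inner variable first: for each fixed $T^Y$ and each $x\in\XCal_T$, the map $y\mapsto \tilde L_T(T^Y(x,y))$ (with $\tilde L_T(y'):=\E[L_T(Y_T,Y')\mid Y'=y']$) attains its minimum on the compact set $f_S^*(\XCal_S)$, and since $\T^X$ is unrestricted one may choose $T^X$ pointwise to hit that minimizer. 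The problem then collapses to a single minimization over the derived class $\tilde\T^Y=\{x\mapsto\inf_{y\in f_S^*(\XCal_S)}\tilde L_T(T^Y(x,y)):T^Y\in\T^Y\}$, which is assumed compact in $\|\cdot\|_\infty$; continuity of the reduced functional then yields the minimizer directly, with no minimizing-sequence or Fatou/dominated-convergence step.

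The trade-off is exactly the one you flag at the end: your approach needs compactness of $\T^X$ (hard to justify for resizing/cropping families, as you note), whereas the paper sidesteps this entirely at the price of requiring $\T^X$ to be maximal and shifting the compactness burden to $f_S^*(\XCal_S)$ and the reduced output class $\tilde\T^Y$. Your route is the textbook one and would apply to parametric sub-families; the paper's decoupling trick is more tailored to the transfer-learning structure, where the pretrained model has bounded range but the input transport is essentially unconstrained.
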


Detailed assumptions and proof for Theorem \ref{thm:existence} is deferred to Appendix \ref{app:thm-exist}.

The procedure of solving this optimization problem is often referred to as fine-tuning in the literature of transfer learning. It is to choose some initial transport mappings $T^X_0\in\T^X_0\subset\T^X$ and $T^Y_0\in\T^Y_0\subset\T^Y$ to derive an intermediate model $f_{ST}\in A_T$ with
\begin{equation}\label{eq: int-model}
    f_{ST}(x)=T^Y_0(x, (f_S^*\circ T^X_0)(x)),\quad \forall x\in\XCal_T,
\end{equation}
with the set of possible intermediate models denoted as
\begin{equation}\label{eq: int-set}
    \ICal=\left\{T^Y_0(\cdot, (f_S^*\circ T^X_0)(\cdot))\big|T^X_0\in\T^X_0, T^Y_0\in\T^Y_0\right\}.
\end{equation}

This fine-tuning procedure allows for computationally efficient evaluation of transferability in terms of {\it transfer risk}, to be introduced in Section \ref{subsec:def_risk}.

\subsection{Examples.}\label{subsec:example}
\paragraph{Image classification.} Consider a transfer learning task in image classification using the Office-31 \cite{saenko2010adapting} benchmark dataset, which consists of images from three domains: Amazon (A), Webcam (W) and DSLR (D). In total, the dataset contains 4110 images of 31 categories of objects typically found in an office environment. Samples from the Office-31 dataset are shown in Figure \ref{fig:office31_sample}.

\begin{figure}[H]
    \centering
    \includegraphics[width=0.5\textwidth]{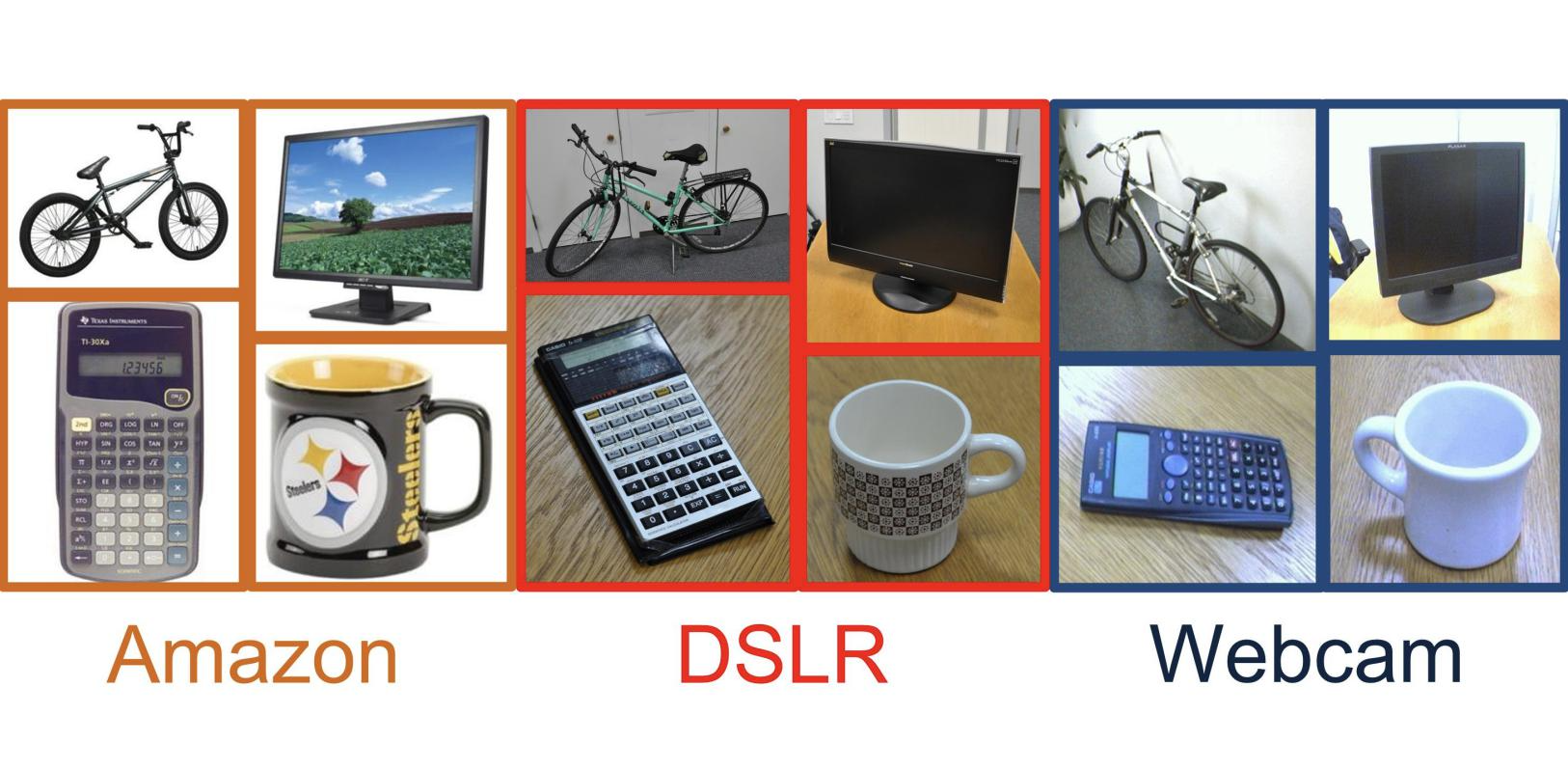}
    \caption{Samples from Office-31.}
    \label{fig:office31_sample}
\end{figure}

The neural network architecture for the image classification task is shown in Figure \ref{fig:office31_archi}. It sequentially consists of: 1) a data-preprocessing module which resizes a input image to $3\times 244\times 244$ dimension; 2) ResNet50 as a feature extractor whose output is a 2048-dimensional feature vector; and 3) a two-layer neural network which maps a 2048-dimensional feature vector to a 31-dimensional probability vector.
\begin{figure}[H]
    \centering
    \includegraphics[width=0.8\textwidth]{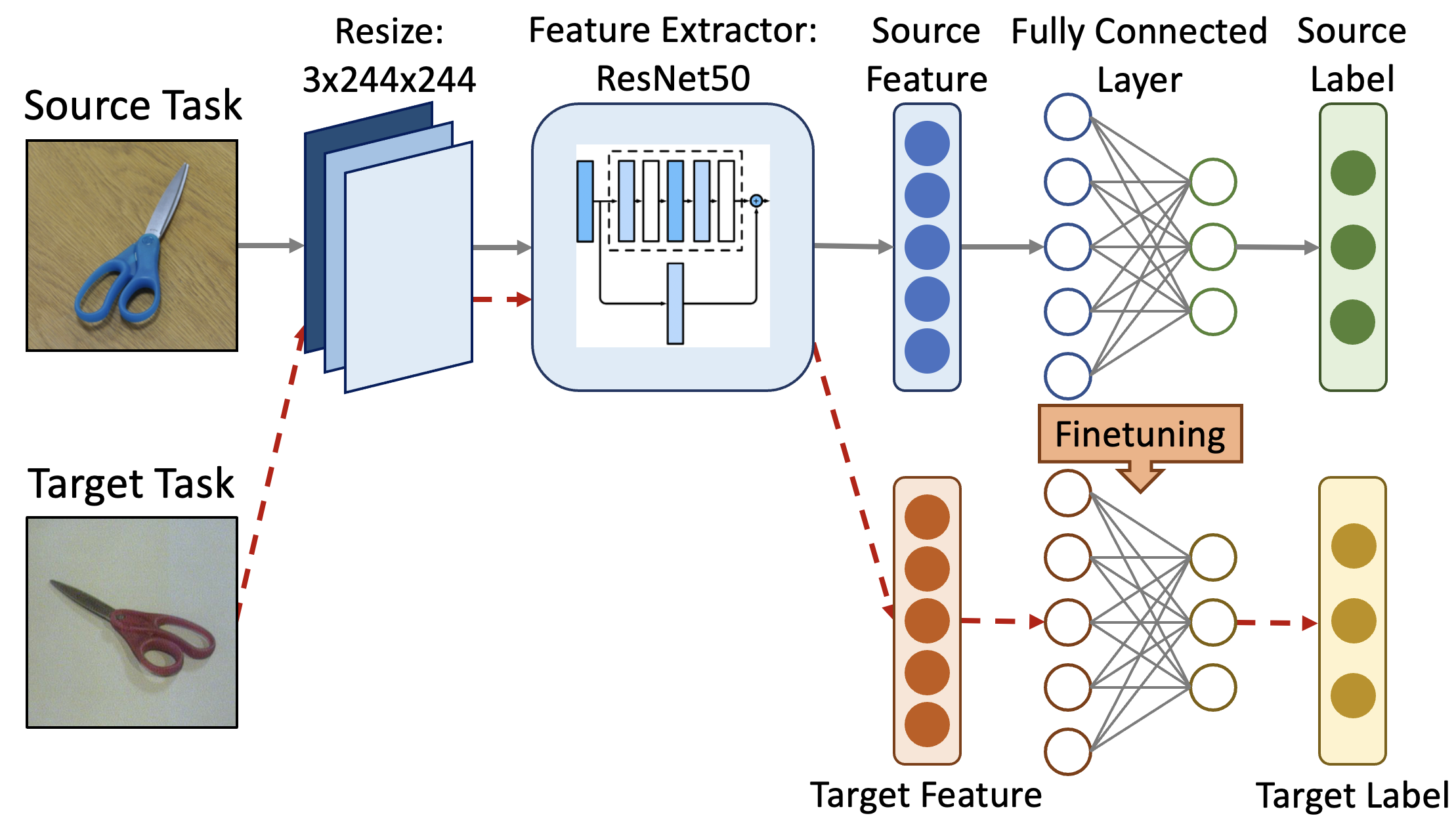}
    \caption{Neural network architecture for Office-31.}
    \label{fig:office31_archi}
\end{figure}

In this example, the source task can be chosen from any of three domains (A, D, or W), with $\mathcal{X}_S=\mathbb{R}^{3\times 244\times 244}$ being the space of resized image samples from the source domain, and 
$$\mathcal{Y}_S=\Delta_{31}:=\{p\in\mathbb{R}^{31}: \sum_1^{31} p_i=1, p_i\geq 0, \forall 1\leq i\leq 31\}$$ 
being the space of image class labels. Similarly, for any target task (A, D, or W), 
$$\mathcal{X}_T=\mathcal{X}_S=\mathbb{R}^{3\times 244\times 244}$$
is the space of resized image samples from the target domain, and $\mathcal{Y}_T=\mathcal{Y}_S=\Delta_{31}$. For both the source and the target tasks, the loss function $L_S=L_T$ is chosen to be the cross entropy between the actual label and the predicted label.

As introduced in Figure \ref{fig:office31_archi}, the set of source models are given by 
$$A_S=\{f_\text{NN}\circ f_\text{Res}:\mathcal{X}_S\to\mathcal{Y}_S| f_\text{NN}\in \text{NN}^{31}_{2048}, f_\text{Res}\in \text{Res}^{2048}_{3\times 244\times 244}\}.$$
Here $\text{Res}^{2048}_{3\times 244\times 244}$ denotes all ResNet50 architectures with $3\times 244\times 244$-dimensional input and 2048-dimensional output, and $\text{NN}^{31}_{2048}$ denotes all two-layer neural networks which map a 2048-dimensional feature vector to a 31-dimensional probability vector in $\mathcal{Y}_S$. The source model $f^*_{\text{Res},S}$ and $f^*_{\text{NN},S}$ is obtained by solving the source task optimization \eqref{eq: obj-s}.

To transfer the source model to the target task, the pretrained ResNet50 model $f^*_{\text{Res},S}$ will be fixed, while the last two-layer classifier $f_\text{NN}\in \text{NN}^{31}_{2048}$ will be fine-tuned using part of the data from the target domain $(\mathcal{X}_T, \mathcal{Y}_T)$.  The input transport set $\mathbb{T}^X$ in this example is a singleton set whose element is the identity mapping on $\mathbb{R}^{3\times 244\times 244}$. Meanwhile, the set of output transport mappings is given by 
\begin{equation}\label{eqn:nnclass}
    \mathbb{T}^Y=\{f_\text{NN}\circ f^*_{\text{Res},S}:\mathcal{X}_T\to\mathcal{Y}_T| f_\text{NN}\in \text{NN}^{31}_{2048}\}.
\end{equation}
The transfer learning task is formulated as 
$$\min_{T^Y\in\mathbb{T}^Y}\E\left[L_T\left(Y_T,T^Y(X_T)\right)\right].$$ 
Note the formulation is slightly simpler than  \eqref{eq: doub-trans} because in this particular example, the output transport in $\mathbb{T}^Y$ takes inputs from $\mathcal{X}_T$ instead of $\mathcal{X}_T\times\mathcal{Y}_S$.
Furthermore, in this example, there is no additional constraint on intermediate models defined in \eqref{eq: int-model}. Therefore, the set $\mathcal{I}$ defined in \eqref{eq: int-set} is equivalent to $\mathbb{T}^Y$ in \eqref{eqn:nnclass}.

\paragraph{Domain adaption.} This class of transfer learning problem considers the case where the output variable for the source and target tasks coincides, i.e., $Y_S=Y_T=Y\in\YCal$, and there exists some one-to-one input transport $T^X$ such that $T^X(X_T)=X_S$ almost surely \cite{flamary2016optimal}. Here we define the family of admissible (initial) output transport mappings as $\T^Y_0=\T^Y=\{id_{\YCal}\}$, where $id_{\YCal}$ denotes the identity mapping on $\YCal$; and define the family of admissible (initial) input transport mappings as $\T^X_0=\T^X=\{T^X:\XCal_T\to\XCal_S\,|\,T\text{ is one-to-one}\}$. Then $\ICal=\{f^*_S\circ T|T\in\T^X_0\}$. 
When the loss functions for the source and the target tasks are also in the same form such that $L_S=L_T=L:\YCal\times\YCal\to\R$, it can be shown that the optimal source model and optimal target model satisfy the relation $f^*_T=f^*_S\circ T^X$, where
\begin{align*}
f^*_\cdot:=\argmin_{f:\XCal_\cdot\to\YCal}\mathbb{E}[L(Y, f(X_\cdot))].
\end{align*}
From the transfer learning perspective, $T^X$ is also the optimal solution to the optimization problem \eqref{eq: doub-trans}. In particular, the transfer learning model $f^*_T=f^*_S\circ T^X$ is equivalent to the optimal model from the direct learning, while solving the transfer learning problem \eqref{eq: doub-trans} may require much less data.

\section{Transfer Risk and Transferability of Transfer Learning}
Given  the mathematical framework and after the feasibility analysis of transfer learning, we will now propose  a novel notion of {\it transfer risk}, to analyze the effectiveness and the appropriateness of transfer learning over the set of all intermediate models $\ICal$ given by \eqref{eq: int-set}.
 
\subsection{Transfer Risk}\label{subsec:def_risk}
The idea is to re-interpret the transfer learning framework \eqref{eq: tl-fw} in a sequential manner:  the mapping $T^X$ first transports $Law(X_T)$ to some probability distribution $Law(T^X(X_T))$; then, applying the pretrained model $f^*_S$ for the optimization problem \eqref{eq: obj-s} yields the distribution $\tilde\pbm_S=Law(f^*_S(T^X(X_T)))$.
%\in\mathcal{P}(\YCal_S)$. 
Finally, an output transport mapping $T^Y$, together with  the target input $X_T$,  transports the distribution $\tilde\pbm_S$ to $\pbm_T$. 
That is,  the transfer learning scheme can be viewed as the composition of the following two steps.
\begin{enumerate}
    \item (Psuedo) Domain adaption, which can also be seen as optimal transport from $Law(X_T)$ to $Law(X_S)$.
    \item Optimal transport from $\tilde\pbm_S=Law(f^*_S(T^X(X_T)))$ to $\pbm_T$ over $T^Y$. 
    %with transport cost given by the regret function $L_T(T^Y(\cdot,f_S(T^X(\cdot)));X_T,Y_T)-L_T(f_T;X_T,Y_T)$.
\end{enumerate}

In other words, in parallel to the three-step procedure in transfer learning, there are two major sources of transfer risk for a fixed intermediate model $f_{ST}\in\ICal$:  the risk that measures the mismatch  between the output distributions of the intermediate model $f_{ST}$ and the optimal target model $f_T^*$, and the risk reflecting the difference between the transported target input and the source input. 

Let us first define the risk associated  the output transport mapping.
\begin{defn}[Output transport risk]\label{defn:outputrisk}
Let \(\Ef^O:A_T\to\R\) be a real-valued function on the set of target models. For any \(f_{ST}\in\ICal\subset A_T\), $\Ef^O(f_{ST})$ is called an output transport risk of intermediate model $f_{ST}$ if it  satisfies
    \begin{enumerate}
        \item $\Ef^O(f_{ST})\geq 0$, i.e., transfer learning always incurs a non-negative effort;
        \item $\Ef^O(f_{ST})=0$ if and only if $\pbm_T=\pbm_{ST}$, where $\pbm_T:=Law(f_{T}(X_T))$ and $\pbm_{ST}:=Law(f_{ST}(X_T))$. That is, the output transport risk vanishes when the intermediate model $f_{ST}$ completely recovers the distribution of the optimal target task.
    \end{enumerate}
\end{defn}

Clearly, the smaller this output risk, the more effective the transfer scheme with the intermediate model $f_{ST}$. 

We next define the risk associated with the input transfer.
\begin{defn}[Input transfer risk]\label{defn: inputrisk}
Let \(\Ef^I:\T^X\to\R\) be a real-valued function on the set of input transport mappings. Given an import transport mapping $T^X_0\in\T^X_0\subset \T^X$, $\Ef^I(T^X_0)$ is called an input transport risk if it  satisfies
    \begin{enumerate}
        \item $\Ef^I(T^X_0)\geq 0$, i.e., transfer learning always incurs a non-negative effort;
        \item $\Ef^I(T^X_0)=0$ if and only if $T^X_0\#Law(X_T)=Law(X_S)$. 
    \end{enumerate}
\end{defn}
The smaller this input risk, the higher the similarity between the transported target input $T^X_0(X_T)$ and the source input $X_S$.

%this is a case where relaying to the source optimization problem is the most beneficial. 

% To solve this optimization problem, one can choose some initial transport mappings $T^X_0\in\T^X_0\subset\T^X$ and $T^Y_0\in\T^Y_0\subset\T^Y$ to derive an intermediate model $f_{ST}\in A_T$ with
% \begin{equation}\label{eq: int-model}
%     f_{ST}(x)=T^Y_0(x, (f_S^*\circ T^X_0)(x)),\quad \forall x\in\XCal_T,
% \end{equation}
% with the set of possible intermediate models denoted as
% \begin{equation}\label{eq: int-set}
%     \ICal=\left\{T^Y_0(\cdot, (f_S^*\circ T^X_0)(\cdot))\big|T^X_0\in\T^X_0, T^Y_0\in\T^Y_0\right\}.
% \end{equation}

Note  that these definitions of risks involve the sets of initial transport mappings $\T^X_0$ and $\T_0^Y$, instead of the sets of all possible transport mappings $\T^X$ and $\T^Y$. These reduced sets allow for efficient evaluation of transfer risk prior to starting the full-scale transfer learning.

Both the input transfer risk and the output transfer risk are functions characterizing the divergence between probability distributions, and their exact forms can be task dependent.  Nevertheless,  there is a key difference between these two forms of risks:  in the output transport risk, $\pbm_T$, the output distribution of the optimal target model, is {\it unknown}, and no prior knowledge about $f_T^*$ is assumed. 
Therefore, analyzing the output transport risk is decisively more complicated.
See more detailed discussions in Section \ref{sec:divergence}.

We are now ready to propose the notion of  {\it transfer risk}  by considering all intermediate models in $\ICal$, in order to  measure the effectiveness of a transfer learning framework \eqref{eq: doub-trans}.
\begin{defn}[Transfer risk]\label{defn: trans-bene}
For a transfer learning procedure characterized by the 6-tuple ${(S,T,\T^X,\T^X_0,\T^Y,\T^Y_0)}$  in \eqref{eq: doub-trans}, the transfer risk of the transfer learning framework \eqref{eq: doub-trans} from source task $S$ to target task $T$ is defined as
\begin{equation}\label{eqn:risk}
\CCal(S,T)=\inf_{f_{ST}\in\ICal}\CCal(S,T|f_{ST}).
\end{equation}
Here, for a given $f_{ST}=T^Y_0(\cdot, (f_S^*\circ T^X_0)(\cdot))\in\ICal$, $\CCal(S,T|f_{ST})$ is called  model-specific transfer risk such that $\CCal(S,T|f_{ST})\ge 0$ with the following properties:
\begin{enumerate}
    \item Let $C:\R\times\R\to\R$ with $C(0,0)=0$. $\CCal(S,T|f_{ST})=C(\Ef^O(f_{ST}),\Ef^I(T^X_0))$ is non-decreasing in $\Ef^O(f_{ST})$ under any fixed $\Ef^I(T^X_0)$ and non-decreasing in $\Ef^I(T^X_0)$ under any fixed $\Ef(f_{ST})$; 
    \item $\CCal(S,T|f_{ST})$ is Lipschitz  in the sense that for any other transfer problem characterized by $(\bar S,\bar T,\bar\T^X,\bar\T_0^X,\bar\T^Y,\bar\T_0^Y)$ and one of its intermediate models $\bar f_{ST}=\bar T^Y_0(\cdot, (\bar f_S^*\circ \bar T^X_0)(\cdot))\in\bar\ICal$, there exists a constant $L>0$ such that 
    \[\begin{aligned}|\CCal(S,T|f_{ST})-\CCal(\bar S,\bar T|\bar f_{ST})|&\leq L(|\Ef^O(f_{ST})-\Ef^O(\bar f_{ST})|\\
    &+|\Ef^I(T^X_0)-\Ef^I(\bar T^X_0)|).\end{aligned}\]
\end{enumerate}
\end{defn}
The expression of this Lipschitz property in Definition \ref{defn: trans-bene} is to emphasize the dependence of transfer risk on a given transfer learning problem. This Lipschitz property is satisfied when the function $C$ in Definition \ref{defn: trans-bene} is Lipschitz continuous. 

One simple example of the model-specific transfer risk is
\begin{equation}\label{eq: risk-ln}
    \CCal^\lambda(S,T|f_{ST})=\Ef^O(f_{ST})+\lambda \Ef^I(T^X_0),
\end{equation}
where $\lambda>0$ is a pre-specified parameter modulating the weight of the input transport in the transfer learning problem \eqref{eq: tl-fw}. 

Transfer risk in Definition \ref{defn: trans-bene} unifies the analysis of the risk from both the input and the output transport mappings. It allows for studying  the trade-off between them. Moreover, two of its key components, the input and the output transfer risks in Definitions \ref{defn: inputrisk} and \ref{defn:outputrisk} generalize earlier works on transferability. For instance, the $\mathcal{H}$-score proposed in \cite{bao2019information} addresses transferability of a particular classification setting and can be incorporated into the output transfer risk in Definition \ref{defn:outputrisk}. Earlier works on the relation between source and target inputs such as \cite{saenko2010adapting, ganin2016domain, long2014transfer} correspond to the special case in Definition \ref{defn: inputrisk} with $T^X_0$ being the identity mapping. 

Furthermore, one can establish the following properties of transfer risk: a)  there is zero transfer risk if the source and the target tasks are identical; and b) transfer risk is continuous in the input distribution and robust with respect to the  pretrained model. (See the exact mathematical statement and analysis of these properties in Appendix \ref{subsec: property}). The continuity of the transfer risk in terms of the changes in the input and the pretrained model is useful to exclude {\it a priori} inappropriate source tasks when compared against existing viable source tasks.

\subsection{Examples}\label{subsec:example_revisited}
We now revisit some examples in Section \ref{subsec:example} and their associated transfer risks based on Definition \ref{defn: trans-bene}. In particular, we will illustrate how the two key components of the transfer risk, namely, the input transport risk $\Ef^I$ and the output transport risk $\Ef^O$, are embedded in transfer learning for a given intermediate model $f_{ST}$.

\paragraph{Transfer risk in domain adaption.}\label{subsec: DA}
Recall the domain adaptation problem in Section \ref{subsec:example}, and consider the case where the transfer risk is independent of the output transport risk, i.e., the input risk $\Ef^I(T^X_0)$ completely determine the transfer risk:
\[\CCal(S,T|f_{ST})=\Ef^I(T^X_0).\] 
In this case, there exists a one-to-one input mapping $T^X\in\T^X_0$ such that $T^X(X_T)=X_S$ almost surely, implying $T^X\#Law(X_T)=Law(X_S)$ and consequently $\CCal(S,T)=\CCal(S,T|f^*_S\circ T^X)=\Ef^I(T^X)=0$. Therefore, vanishing input transport risk is a necessary condition for the domain adaptation framework to hold. Thus, the input transport risk may be adopted to check the viability of domain adaptation on certain tasks.

\paragraph{Transfer risk in image classification.}
Recall the image classification problem introduced in Section \ref{subsec:example}. Fix a source task $S$ and a target task $T$. Since the input transport set $\mathbb{T}^X$ in this problem is a singleton set, the input transport risk $\Ef^I$ is a constant depending on $Law(X_S)$ and $Law(X_T)$, with the output transport risk  denoted as $\Ef^O(f_{ST})$ for any $f_{ST}\in\mathcal{I}$ in \eqref{eqn:nnclass}. By Definition \ref{defn: trans-bene}, the model-specific transfer risk $\mathcal{C}(S,T|f_{ST})=C(\Ef^I, \Ef^O(f_{ST}))$ for some appropriate function $C:\R^2\to\R$ satisfying conditions stated in Definition \ref{defn: trans-bene}. In particular, since the function $C$ is non-decreasing with respect to $\Ef^O(f_{ST})$, minimizing $\mathcal{C}(S,T|f_{ST})$ over $f_{ST}\in\mathcal{I}$ is equivalent to minimizing $\Ef^O(f_{ST})$ over $f_{ST}\in\mathcal{I}$:
\begin{equation*}
    \argmin_{f_{ST}\in\mathcal{I}}\mathcal{C}(S,T|f_{ST})=\argmin_{f_{ST}\in\mathcal{I}}\Ef^O(f_{ST}).
\end{equation*}
And consequently,
\begin{equation*}
    \mathcal{C}(S,T)=\min_{f_{ST}\in\mathcal{I}}\mathcal{C}(S,T|f_{ST})=C(\Ef^I,\min_{f_{ST}\in\mathcal{I}}\Ef^O(f_{ST})).
\end{equation*}

\subsection{Transfer Risk and Choices of Divergence Functions}
\label{sec:divergence}
Clearly, different learning tasks may require  different choices of divergence functions for assessment of transfer risk. In this section, we present two forms of transfer risks
based on two divergence functions, and analyze their properties and relations.

\paragraph{KL-based output transport risk.} For learning tasks such as the classification problem, one may use  cross-entropy as the loss function. 

Specifically, let $\pbm_T=\tilde\pbm_T+\pbm_0$ be its unique Lebesgue decomposition, i.e., for any measurable set $B\subset\YCal_T$, there exists some  function $h_{ST}:\YCal_T\to\R^+$ such that
$\tilde\pbm_T(B)=\int_Bh_{ST}d\pbm_{ST}$, with $\pbm_0$ singular with respect to $\pbm_{ST}$. Then the KL-based output risk can be  defined as
    \[
    \Ef^O_{KL}(f_{ST}):=D_{KL}(\tilde\pbm_T\|\pbm_{ST})+{H}(\pbm_0),
    \]
where $H(\pbm_0)$ is the entropy function of $\pbm_0$.

\begin{prop}\label{lem: kl-heur}
    For a classification problem over $K\in\mathbb{N}$ classes with cross entropy as the training loss,  for any $f_{ST}\in\mathcal{I}$, 
    \begin{eqnarray*}
    \sum_{i=1}^K\log p_{ST}(i) \leq H(\pbm_T,\pbm_{ST})-H(Law(Y_T),\pbm_{ST}) \leq -\sum_{i=1}^K\log p_{ST}(i),
    \end{eqnarray*}
    where $p_{ST}$ denotes the probability mass function for $\pbm_{ST}$.
\end{prop}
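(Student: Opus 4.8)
The plan is to unwind the definition of cross-entropy and reduce the two-sided bound to an elementary term-by-term estimate. Write $\pbm_Y := Law(Y_T)$, and adopt the convention $H(p,q) = -\sum_{i=1}^K p(i)\log q(i)$ for the cross-entropy of two probability mass functions on the label set $\{1,\dots,K\}$. The first step is to record that, in the classification setting, $\pbm_T$, $\pbm_{ST}$ and $\pbm_Y$ are all probability distributions on the finite set $\{1,\dots,K\}$ (identifying a soft output valued in the simplex $\YCal_T$ with the class distribution it induces), so that $p_{ST}$ is a genuine pmf. Since the two cross-entropies appearing in the statement share the same second argument $\pbm_{ST}$, their difference collapses to a single sum by linearity:
\[
H(\pbm_T,\pbm_{ST}) - H(\pbm_Y,\pbm_{ST}) \;=\; \sum_{i=1}^K\bigl(\pbm_T(i)-\pbm_Y(i)\bigr)\bigl(-\log p_{ST}(i)\bigr).
\]

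The second, and essentially only substantive, step is a term-by-term bound on this sum. For each $i$, since $\pbm_T(i),\pbm_Y(i)\in[0,1]$ we have $|\pbm_T(i)-\pbm_Y(i)|\le 1$, and since $p_{ST}(i)\in(0,1]$ we have $-\log p_{ST}(i)\ge 0$; hence the $i$-th summand lies in the interval $\bigl[\log p_{ST}(i),\,-\log p_{ST}(i)\bigr]$. Summing over $i=1,\dots,K$ gives
\[
\sum_{i=1}^K\log p_{ST}(i) \;\le\; H(\pbm_T,\pbm_{ST}) - H(Law(Y_T),\pbm_{ST}) \;\le\; -\sum_{i=1}^K\log p_{ST}(i),
\]
which is exactly the claim.

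I do not expect a genuine obstacle: once the definitions are unpacked, the estimate follows immediately from $|\pbm_T(i)-\pbm_Y(i)|\le 1$ and $\log p_{ST}(i)\le 0$. The only points deserving a line of care are (i) justifying that $\pbm_T$, $\pbm_{ST}$ and $\pbm_Y$ may be treated as pmfs on $\{1,\dots,K\}$ rather than as laws on the simplex $\YCal_T$ — this is precisely where the hypothesis of a $K$-class classification problem enters; and (ii) the degenerate case $p_{ST}(i)=0$ for some $i$, which I would either exclude (softmax-type output layers produce strictly positive probabilities) or absorb into the convention $0\log 0 = 0$, under which both bounds read $-\infty$ and the inequality holds vacuously. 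One could sharpen the estimate slightly using $\sum_i(\pbm_T(i)-\pbm_Y(i))=0$, but this is not needed for the stated result; a remark interpreting the middle quantity as the gap between the transfer-learning training loss and the (unattainable) cross-entropy against the true labels would round out the discussion.
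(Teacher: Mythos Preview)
Your argument is correct and is essentially the only natural one: write out the two cross-entropies against the common second argument $\pbm_{ST}$, collapse their difference to $\sum_i(\pbm_T(i)-\pbm_Y(i))(-\log p_{ST}(i))$, and bound term-by-term using $|\pbm_T(i)-\pbm_Y(i)|\le 1$ and $-\log p_{ST}(i)\ge 0$. The paper itself does not spell out a proof of this proposition, so there is nothing to compare against beyond noting that your derivation matches what the statement and its surrounding discussion clearly intend; your caveats about interpreting $\pbm_T,\pbm_{ST},\pbm_Y$ as pmfs on $\{1,\dots,K\}$ and about the degenerate case $p_{ST}(i)=0$ are appropriate and need no further elaboration.
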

Note that $H(Law(Y_T),\pbm_{ST})$ is indeed the cross-entropy loss for the classifier $f_{ST}$. Therefore, in actual training, one may use $H(Law(Y_T),\pbm_{ST})\pm\sum_{i=1}^K\log p_{ST}(i)$ to replace   $\Ef^O_{KL}(f_{ST})$.

\paragraph{Wasserstein-based output transport risk.}
For learning problems such as GANs or supervised learning with domain adaption, Wasserstein and related distances are popular choices to measure the distance between the generative distribution and the target distribution. Therefore, a Wassertein-based output risk is a natural choice  related to such learning targets. 

More specifically, for $p\geq1$, let $\mathcal{P}_p(\YCal_T)$ be the set of probability measures over $\YCal_T$ such that
\[\int_{\R^{d_{O,T}}}\|x\|_{\YCal_T}^pd\mu(x)<\infty,\quad\forall\mu\in\mathcal{P}_p(\YCal_T).\]
The Wasserstein-based output risk is defined as
\begin{align}\label{eqn:W_risk}
\Ef^O_{W}(f_{ST}):=\mathcal{W}_p(\pbm_{ST},\pbm_{T})^p:=\inf_{\gamma\in\Pi(\pbm_{ST},\pbm_{T})}\int_{\R^{d_{O,T}}\times\R^{d_{O,T}}}\|x-y\|_{\YCal_T}^pd\gamma(dx,dy),
\end{align}
for some suitable choice of $p\geq1$, where $\Pi(\pbm_{ST},\pbm_{T})$ denotes the set of couplings of probability measures $\pbm_{ST}$ and $\pbm_{T}$.

Analogy to Proposition \ref{lem: kl-heur} is the following property for $\Ef^O_W(f_{ST})$, based on the triangle inequality of the Wasserstein distance.
\begin{prop}\label{lem: w-heur}
    The Wasserstein-based output risk $\Ef^O_W$ in (\ref{eqn:W_risk}) is upper bounded in the following sense:
    \[\Ef^O_W(f_{ST})\leq 2^{p-1}[\mathcal{W}_p(\pbm_{ST},Law(Y_T))^p+\mathcal{W}_p(\pbm_{T},Law(Y_T))^p].\]
\end{prop}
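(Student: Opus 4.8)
The plan is to derive the bound directly from the fact that $\mathcal{W}_p$ is a genuine metric on $\mathcal{P}_p(\YCal_T)$, combined with the elementary convexity inequality $(a+b)^p\le 2^{p-1}(a^p+b^p)$ valid for all $a,b\ge 0$ and $p\ge 1$. First I would record that all three measures involved — $\pbm_{ST}=Law(f_{ST}(X_T))$, $\pbm_T=Law(f_T^*(X_T))$, and $Law(Y_T)$ — lie in $\mathcal{P}_p(\YCal_T)$: for the first two this follows from the admissibility of $f_{ST}$ and $f_T^*$ as target models together with a $p$-th moment bound on $X_T$, and for $Law(Y_T)$ it is the standing moment assumption on the target label. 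This places us in the metric space $(\mathcal{P}_p(\YCal_T),\mathcal{W}_p)$ with finite distances.

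Next I would apply the triangle inequality in this metric space, using $Law(Y_T)$ as the intermediate point:
\[
\mathcal{W}_p(\pbm_{ST},\pbm_T)\le \mathcal{W}_p(\pbm_{ST},Law(Y_T))+\mathcal{W}_p(Law(Y_T),\pbm_T).
\]
If one prefers to avoid quoting the metric property as a black box, the same inequality can be produced by hand via the gluing lemma: take optimal (or near-optimal) couplings $\gamma_1\in\Pi(\pbm_{ST},Law(Y_T))$ and $\gamma_2\in\Pi(Law(Y_T),\pbm_T)$, glue them along their common marginal $Law(Y_T)$ to obtain a coupling of $(\pbm_{ST},\pbm_T)$, and apply the Minkowski inequality in $L^p$ to the pointwise bound $\|x-z\|_{\YCal_T}\le\|x-y\|_{\YCal_T}+\|y-z\|_{\YCal_T}$; taking the infimum over couplings on the left and letting $\gamma_1,\gamma_2$ be optimal on the right recovers the displayed inequality.

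Finally I would raise both sides to the $p$-th power and invoke $(a+b)^p\le 2^{p-1}(a^p+b^p)$ (a consequence of Jensen's inequality for the convex map $t\mapsto t^p$), with $a=\mathcal{W}_p(\pbm_{ST},Law(Y_T))$ and $b=\mathcal{W}_p(Law(Y_T),\pbm_T)$, which gives exactly
\[
\Ef^O_W(f_{ST})=\mathcal{W}_p(\pbm_{ST},\pbm_T)^p\le 2^{p-1}\left[\mathcal{W}_p(\pbm_{ST},Law(Y_T))^p+\mathcal{W}_p(\pbm_T,Law(Y_T))^p\right].
\]
There is no genuine obstacle here; the only point requiring care is the integrability bookkeeping that guarantees the three Wasserstein distances are well defined and finite, so that the triangle inequality legitimately applies — i.e. confirming the $p$-th moment bounds on $\pbm_{ST}$, $\pbm_T$, and $Law(Y_T)$ under the paper's assumptions on $A_T$ and on $Y_T$. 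Once that is in place the proof is a two-line application of the triangle inequality followed by the convexity estimate.
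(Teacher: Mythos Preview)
Your proposal is correct and matches the paper's approach exactly: the paper simply states that the bound is ``based on the triangle inequality of the Wasserstein distance,'' which is precisely what you carry out, followed by the standard convexity estimate $(a+b)^p\le 2^{p-1}(a^p+b^p)$.
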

 
Now, consider any intermediate model $f_{ST}$, then Talagrand's inequality 
\cite{talagrand1996transportation} gives 
$$\Ef^I_W(T_0^X)\leq 2\Ef^I_{KL}(T_0^Y),\Ef^O_W(f_{ST})\leq 2\Ef^O_{KL}(f_{ST}).$$
In particular, the linear transfer risk defined in \eqref{eq: risk-ln} satisfies
\begin{align}\label{eqn:risk-ln-KLW}
    \CCal^\lambda_{W}(S,T|f_{ST}):=\Ef^O_W(f_{ST})+\lambda\cdot\Ef^I_W(T_0^X)\leq 2\CCal^\lambda_{KL}(S,T|f_{ST}):=2(\Ef^O_{KL}(f_{ST})+\lambda\cdot\Ef^I_{KL}(T_0^X))
\end{align}
Such a relation between KL- and Wasserstein-based linear transfer risks \eqref{eqn:risk-ln-KLW} gives the following proposition.
\begin{prop}\label{prop: talagrand}
Consider transfer risk in linear form as in \eqref{eqn:risk-ln-KLW}. Suppose $\YCal_T$ is a finite-dimensional Euclidean space and $\pbm_T\ll\pbm_{ST}$. Then for a given transfer learning problem $(S,T,\T_X,\T_Y,\T_X^0,\T_Y^0)$, 
\[\CCal_W(S,T)\leq 2\CCal_{KL}(S,T).\]
\end{prop}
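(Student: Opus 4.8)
The plan is to prove the statement by reducing it to the pointwise model-specific inequality \eqref{eqn:risk-ln-KLW} and then passing to the infimum over a common index set. Recall from Definition \ref{defn: trans-bene} (applied with the linear model-specific risks) that $\CCal_W(S,T)=\inf_{f_{ST}\in\ICal}\CCal^\lambda_W(S,T|f_{ST})$ and $\CCal_{KL}(S,T)=\inf_{f_{ST}\in\ICal}\CCal^\lambda_{KL}(S,T|f_{ST})$, where the set $\ICal$ of intermediate models in \eqref{eq: int-set} is fixed by the given $6$-tuple $(S,T,\T_X,\T_Y,\T_X^0,\T_Y^0)$ and, crucially, is the \emph{same} set for both risks. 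So the entire argument is: establish $\CCal^\lambda_W(S,T|f_{ST})\le 2\,\CCal^\lambda_{KL}(S,T|f_{ST})$ for every $f_{ST}\in\ICal$, then take $\inf_{f_{ST}\in\ICal}$ of both sides.

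First I would verify that the pointwise bound \eqref{eqn:risk-ln-KLW} is legitimately available for each $f_{ST}\in\ICal$ under the two standing hypotheses. Since $\pbm_T\ll\pbm_{ST}$, the Lebesgue decomposition $\pbm_T=\tilde\pbm_T+\pbm_0$ used to define $\Ef^O_{KL}$ has trivial singular part $\pbm_0=0$ and density $h_{ST}=d\pbm_T/d\pbm_{ST}$, so $\Ef^O_{KL}(f_{ST})=D_{KL}(\pbm_T\|\pbm_{ST})$ with the entropy correction vanishing; and since $\YCal_T$ is a finite-dimensional Euclidean space, the Wasserstein distance $\mathcal{W}_p(\pbm_{ST},\pbm_T)$ and Talagrand's transportation inequality are well posed there. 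Applying Talagrand's inequality with $p=2$ (i.e. taking the exponent in the output risk \eqref{eqn:W_risk} to be $2$) gives $\Ef^O_W(f_{ST})=\mathcal{W}_2(\pbm_{ST},\pbm_T)^2\le 2\,D_{KL}(\pbm_T\|\pbm_{ST})=2\,\Ef^O_{KL}(f_{ST})$, and the analogous bound $\Ef^I_W(T^X_0)\le 2\,\Ef^I_{KL}(T^X_0)$ holds on the source input space. Multiplying the input bound by $\lambda>0$ and adding the two yields exactly $\CCal^\lambda_W(S,T|f_{ST})\le 2\,\CCal^\lambda_{KL}(S,T|f_{ST})$, uniformly over $f_{ST}\in\ICal$.

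It then remains only to take infima. Since this inequality holds for every element of the common index set $\ICal$, monotonicity of the infimum (if $a(x)\le b(x)$ for all $x$ then $\inf_x a(x)\le\inf_x b(x)$) together with positive homogeneity of $\inf$ gives
\[
\CCal_W(S,T)=\inf_{f_{ST}\in\ICal}\CCal^\lambda_W(S,T|f_{ST})\le \inf_{f_{ST}\in\ICal}2\,\CCal^\lambda_{KL}(S,T|f_{ST})=2\inf_{f_{ST}\in\ICal}\CCal^\lambda_{KL}(S,T|f_{ST})=2\,\CCal_{KL}(S,T),
\]
which is the claim.

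The hard part will be the \emph{uniform} applicability of the Talagrand step, and this is where I expect the real obstacle to lie: the classical transportation inequality $\mathcal{W}_2(\mu,\nu)^2\le 2\,D_{KL}(\mu\|\nu)$ is a property of the reference measure $\nu$ (it holds for Gaussians, and more generally for measures satisfying a $T_2$ inequality), so one must justify that every output law $\pbm_{ST}$ coming from $\ICal$, and the relevant source-input law, belongs to the class for which the constant-$2$ inequality is valid. This is precisely the role of the finite-dimensional Euclidean structure together with $\pbm_T\ll\pbm_{ST}$ (and, implicitly, the Gaussian/sub-Gaussian regime considered in this part of the paper), which is why these appear as hypotheses. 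A second, minor point is exponent consistency: the factor $2$ in the conclusion is tied to the choice $p=2$ in \eqref{eqn:W_risk}; a different $p$ would carry a $p$-dependent constant. Everything beyond these two points is the routine infimum-monotonicity displayed above.
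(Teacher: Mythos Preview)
Your proposal is correct and matches the paper's own argument: the paper derives the pointwise model-specific inequality \eqref{eqn:risk-ln-KLW} from Talagrand's inequality and then states Proposition~\ref{prop: talagrand} as an immediate consequence, which is exactly your ``pointwise bound plus infimum over the common index set $\ICal$'' route. Your additional remarks on the need for the reference measures to satisfy a $T_2$ inequality and on the exponent $p=2$ are valid caveats that the paper leaves implicit.
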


\subsection{Transfer Risk and Regret}\label{subsec:gaussian}
We will establish the connection between the transfer risk \eqref{eqn:risk} and the transfer learning performance through a linear regression example.

 Consider a source task $S$ and a target task $T$ with the same input space $\XCal_S=\XCal_T=\R^d$ and the same input space $\YCal_S=\YCal_T=\R$. Both source and target data satisfy two $(d+1)$-dimensional Gaussian distributions: $(X_\cdot,Y_\cdot)\sim \mathcal{N}(\mu_\cdot,\Sigma_\cdot)$ with
\begin{equation}\label{eq: source-data}
\mu_\cdot=\begin{pmatrix}\mu_{\cdot,X}\\\mu_{\cdot,Y}\end{pmatrix},\quad \Sigma_\cdot=\begin{pmatrix}\Sigma_{\cdot,X}&\Sigma_{\cdot,XY}\\\Sigma_{\cdot,YX}&\Sigma_{\cdot,Y}\end{pmatrix},
\end{equation}
where
$\mu_{\cdot,Y}\text{ and }\Sigma_{\cdot,Y}\in\R$, $\mu_{\cdot,X}\text{ and }\Sigma_{\cdot,XY}\in\R^d$, $\Sigma_{\cdot,YX}=\Sigma_{\cdot,XY}^\top$, and $\Sigma_{\cdot,X}\in\R^{d\times d}$. Define the sets of admissible source and target models $A_S=A_T=\{f:\R^d\to\R\}$. For any $f\in A_S=A_T$, define the loss function as
\begin{equation}\label{eq: lrloss}
    \LL_S(f)=\E\|Y_S-f(X_S)\|_2^2, \LL_T(f)=\E\|Y_T-f(X_T)\|_2^2.
\end{equation}

Under such a setting, the optimal source and target models are obtained by direct computations: $f_\cdot^*(x)=w_\cdot^\top x+b_\cdot$ with
\begin{align}\label{eq: source-opt}
    w_\cdot=\Sigma_{\cdot,X}^{-1}\Sigma_{\cdot,XY},\quad b_\cdot=\mu_{\cdot,Y}-\Sigma_{\cdot,YX}\Sigma_{\cdot,X}^{-1}\mu_{\cdot,X}.
\end{align}

\paragraph{Transfer learning.} Take the above linear regression example, and consider a simple setting where the input (resp. output) transport set $\mathbb{T}^X$ (resp. $\mathbb{T}^Y$) is a singleton set only containing the identical mapping on $\R^d$ (resp. $\R$). Then, the transfer learning scheme \eqref{eq: doub-trans} is equivalent to directly applying the optimal source model $f^*_S$ to the target task. Consequently, the intermediate model set $\mathcal{I}$ in \eqref{eq: int-set} is also a singleton set with $\mathcal{I}=\{f^*_S\}$. 

Now, define the transfer risk in this linear regression problem as the Wassersteinn-based output transport risk as in \eqref{eqn:W_risk}:
\begin{equation}\label{eqn:linear_risk}
\CCal_W(S,T)=\CCal_W(S,T|f^*_S)=\Ef^O_W(f^*_S).
\end{equation}
\paragraph{Regret.} Next, 
define the notion of \textit{regret} as the gap between the transfer learning  and the direct learning:
\begin{equation}\label{eqn:regret}
    \RCal(S,T):=\LL_T(f^*_S)-\LL_T(f^*_T)
\end{equation}
Then, the following proposition shows that the transfer risk serves as a lower bound of the regret.

% $\RCal(S,T):=\LL_T(f_{ST})-\LL_T(f_T)$. Notice that the squared $L_2$ loss in linear regression belongs to a family of Bregman divergence $D_{\phi}:\YCal_T\times\YCal_T\to\R$ with some strictly and differentiable real-valued function $\phi$ over $\YCal_T$, where
% \[D_{\phi}(y,y')=\phi(y)-\phi(y')-\langle\nabla\phi(y'),y-y'\rangle.\]
% \begin{prop}
% \label{coro: regret}
% Suppose the training loss in the target task is a Bregman divergence $D_\phi$ with $\phi$ strictly convex and differentiable, then the regret with respect to the intermediate model $f_{ST}$ and the optimal target model $f_T$ is given by
%     \[
%     \RCal_\phi(S,T)=\E[D_{\phi}(f_T(X_T),f_{ST}(X_T))].
% \]
% \end{prop}

\begin{prop}\label{prop: lb}
    For transfer learning in linear regression with Gaussian data, the regret with respect to the chosen intermediate model $\RCal(S,T)$ in \eqref{eqn:regret} is lower bounded by the Wasserstein-based transfer risk in \eqref{eqn:linear_risk},
    \[\CCal_W(S,T)\leq \RCal(S,T).\]
\end{prop}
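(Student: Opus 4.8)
The plan is to turn both sides of the claimed inequality into explicit expressions in the regression coefficients $w_S,w_T$ of \eqref{eq: source-opt} and then finish with the reverse triangle inequality for the seminorm induced by $\Sigma_{T,X}$. Throughout I take $p=2$ in \eqref{eqn:W_risk}, which is the natural choice given the quadratic loss \eqref{eq: lrloss}.

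First I would compute the regret in closed form. Writing $f^*_\cdot(x)=w_\cdot^\top x+b_\cdot$, a direct expansion of $\LL_T(f)=\E\|Y_T-f(X_T)\|_2^2$ for an affine $f(x)=w^\top x+b$ gives $\LL_T(f)=\mathrm{Var}(Y_T-w^\top X_T)+\big(\mu_{T,Y}-w^\top\mu_{T,X}-b\big)^2$. Evaluating this at $f^*_T$ (where the bias term vanishes because $b_T=\mu_{T,Y}-w_T^\top\mu_{T,X}$, and the variance term equals the conditional variance $\Sigma_{T,Y}-\Sigma_{T,YX}\Sigma_{T,X}^{-1}\Sigma_{T,XY}$) and at $f^*_S$, and using the identity $\Sigma_{T,XY}=\Sigma_{T,X}w_T$, the cross terms telescope and I expect to obtain
\[
\RCal(S,T)=(w_S-w_T)^\top\Sigma_{T,X}(w_S-w_T)+m_S^2,\qquad m_S:=(\mu_{T,Y}-\mu_{S,Y})-w_S^\top(\mu_{T,X}-\mu_{S,X}).
\]

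Next I would compute the transfer risk. Since $\mathbb{T}^X,\mathbb{T}^Y$ are singletons containing the identities, $\mathcal{I}=\{f^*_S\}$ and $\CCal_W(S,T)=\Ef^O_W(f^*_S)=\mathcal{W}_2(\pbm_{ST},\pbm_T)^2$ with $\pbm_{ST}=Law(f^*_S(X_T))$ and $\pbm_T=Law(f^*_T(X_T))$. Both are one-dimensional Gaussians, being affine images of $X_T\sim\mathcal{N}(\mu_{T,X},\Sigma_{T,X})$: precisely $\pbm_{ST}=\mathcal{N}\big(w_S^\top\mu_{T,X}+b_S,\ w_S^\top\Sigma_{T,X}w_S\big)$ and $\pbm_T=\mathcal{N}\big(\mu_{T,Y},\ w_T^\top\Sigma_{T,X}w_T\big)$. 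Invoking the classical closed form $\mathcal{W}_2(\mathcal{N}(a_1,s_1^2),\mathcal{N}(a_2,s_2^2))^2=(a_1-a_2)^2+(s_1-s_2)^2$ for scalar Gaussians (attained by the monotone coupling), and noting that the mean gap is exactly $-m_S$, I get
\[
\CCal_W(S,T)=m_S^2+\Big(\sqrt{w_S^\top\Sigma_{T,X}w_S}-\sqrt{w_T^\top\Sigma_{T,X}w_T}\Big)^2.
\]

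Finally, after cancelling the common term $m_S^2$, the desired inequality $\CCal_W(S,T)\le\RCal(S,T)$ reduces to $\big(\|w_S\|_{\Sigma_{T,X}}-\|w_T\|_{\Sigma_{T,X}}\big)^2\le\|w_S-w_T\|_{\Sigma_{T,X}}^2$, where $\|v\|_{\Sigma_{T,X}}:=\sqrt{v^\top\Sigma_{T,X}v}$ is the seminorm induced by the positive semidefinite matrix $\Sigma_{T,X}$; this is exactly the reverse triangle inequality $\big|\|a\|-\|b\|\big|\le\|a-b\|$, which holds for any seminorm, and completes the proof. I do not anticipate a genuine obstacle: the only delicate points are the bookkeeping in the regret expansion — ensuring every cross term collects into the single quadratic form above — and applying the scalar-Gaussian Wasserstein formula with $p=2$; note in particular that the final step needs no invertibility of $\Sigma_{T,X}$ since the reverse triangle inequality is valid for seminorms.
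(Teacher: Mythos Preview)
Your proposal is correct and follows essentially the same route as the paper: both compute the regret as $\|w_S-w_T\|_{\Sigma_{T,X}}^2+m_S^2$ and the Wasserstein risk as $(\|w_S\|_{\Sigma_{T,X}}-\|w_T\|_{\Sigma_{T,X}})^2+m_S^2$, then compare the variance terms. The only cosmetic difference is that the paper phrases the final step as Cauchy--Schwarz on $\langle\Sigma_{T,X}^{1/2}w_S,\Sigma_{T,X}^{1/2}w_T\rangle\le\|\Sigma_{T,X}^{1/2}w_S\|_2\|\Sigma_{T,X}^{1/2}w_T\|_2$, whereas you invoke the equivalent reverse triangle inequality for the $\Sigma_{T,X}$-seminorm.
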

Proposition \ref{prop: lb} suggests that in evaluating the transfer learning scheme \eqref{eq: doub-trans}, transfer risk provides a proper initial indication of its effectiveness, especially for eliminating unsuitable candidate pretrained models or source tasks if the transfer risk is large. The proof of Proposition \ref{prop: lb}, together with detailed analysis of transfer risk and regret with Gaussian data, is in Appendix \ref{app:gaussian}.

\section{Numerical Experiments with Office-31}\label{sec:image_experiment}
In this section, we will demonstrate the correlation between the performance of the transfer learning scheme \eqref{eq: doub-trans} and the transfer risk \eqref{defn: trans-bene}, through numerical experimentation using the Office-31 dataset for image classification.  

\subsection{Experiment Set-up}
Recall the neural network architecture for the experiment introduced in Section \ref{subsec:example}. For each pair of the source and the target tasks, the source model is first trained using the source data, and then the fully connected layer of the pretrained model is fine tuned using half of the target data. The performance of the model is measured by the classification accuracy using the remaining of the target data.

\paragraph{Transfer risk.} Now let us define the explicit form of transfer risk for this example. Fix a source-target pair $(S,T)$. Recall that the input transport risk $\Ef^I$ is a constant since the input transport set $\mathbb{T}^X$ is a singleton set. More specifically, we define the input transport risk as 
\begin{equation}\label{eqn:office_input}
    \Ef^I:=\mathcal{W}_1(Law(X_S), Law(X_T)),
\end{equation}
which is the Wasserstein-1 distance between the empirical distribution of (resized) source images $Law(X_S)$ and the empirical distribution of (resized) target images $Law(X_T)$. Meanwhile, for any $f_{ST}\in\mathcal{I}$ \eqref{eqn:nnclass}, we define the output transport risk as $\Ef^O(f_{ST})=\mathcal{W}_1({\mathbb{P}}_{ST}, {\mathbb{P}}_T).$
Furthermore, as discussed in Section \ref{subsec:example_revisited}, the transfer risk is given by 
\begin{equation}\label{eqn:office_risk}
    \mathcal{C}(S,T)=C(\Ef^I,\min_{f_{ST}\in\mathcal{I}}\Ef^O(f_{ST})),
\end{equation}
for some function $C:\R^2\to\R$ satisfying the regularity conditions in Definition \ref{defn: trans-bene}. Note the the optimal target distribution $\mathbb{P}_T$ in the definition of $\Ef^O(f_{ST})$ is unknown {\it a priori}. Thus, as suggested by Proposition \ref{lem: w-heur}, we approximate $\Ef^O(f_{ST})$ by $\mathcal{W}_1({\mathbb{P}}_{ST}, Law(Y_T))$. Denote the approximated output transfer risk as 
\begin{equation}\label{eqn:office_out_risk}
    \widehat{\Ef}^O=\min_{f_{ST}\in\mathcal{I}}\mathcal{W}_1({\mathbb{P}}_{ST}, Law(Y_T)).
\end{equation}
Finding $\widehat{\Ef}^O$ \eqref{eqn:office_out_risk} is an optimization problem over a neural network function class $f_{ST}\in\mathcal{I}$ \eqref{eqn:nnclass}, which is solved by gradient descent in the numerical experiment. Finally, the (approximated) transfer risk is obtained by plugging $\widehat{\Ef}^O$ into \eqref{eqn:office_risk}.

\subsection{Numerical Result}
Three different domains in Office-31 (A, D, and W) lead to $3\times 2=6$ source-target pairs in total. The accuracy, the input transport risk $\Ef^I$ \eqref{eqn:office_input}, and the output transport risk $\widehat\Ef^O$ \eqref{eqn:office_out_risk} for each pair of source and target tasks are reported in the first three rows of Table \ref{tab:office31}. Here the input transport risk is rescaled by a constant factor to achieve the same scale as the other metrics.

In order to compute the transfer risk $\mathcal{C}$ in \eqref{eqn:office_risk} given $\Ef^I$ in \eqref{eqn:office_input} and $\widehat\Ef^O$ in \eqref{eqn:office_out_risk}, an appropriate form of function $C$ in \eqref{eqn:office_risk} need to be determined. In this experiment, we search $C$ from the class of second order polynomials, so as to maximize the (absolute value of) correlation between the transfer learning accuracy and the transfer risk. In particular, we define the risk in the following form:
\begin{equation}\label{eqn:office_risk_exact}
    \mathcal{C}(S,T)=0.31\cdot\Ef^I+0.92\cdot\left(\widehat\Ef^O\right)^2.
\end{equation}
Transfer risks for all source-target pair are reported in the last row of Table \ref{tab:office31}.

\begin{table}[H]
  \centering
    \begin{tabular}{c|cccccc}
    \toprule
    \toprule
    \textbf{Metric\textbackslash{}Task} & \textbf{A-W}   & \textbf{A-D}   & \textbf{W-A}   & \textbf{W-D}   & \textbf{D-A}   & \textbf{D-W} \\
    \midrule
    \textbf{Accuracy} & 80.9\% & 83.1\% & 66.9\% & 94.5\% & 66.6\% & 87.8\% \\
    \midrule
    \textbf{Input Risk} & 0.181 & 0.263 & 0.181 & 0.148 & 0.263 & 0.148 \\
    \midrule
    \textbf{Output Risk} & 0.428 & 0.380 & 0.545 & 0.084 & 0.543 & 0.412 \\
    \midrule
    \textbf{Transfer Risk} & 0.224 & 0.214 & 0.330 & 0.052 & 0.353 & 0.201 \\
    \bottomrule
    \bottomrule
    \end{tabular}
  \caption{Accuracy and transfer risk.}
  \label{tab:office31}
\end{table}

\paragraph{Accuracy v.s. transfer risk.} Figure \ref{fig:office31} demonstrates a significant negative correlation between the transfer learning accuracy and the transfer risk: the higher the risk, the lower the transfer learning accuracy. For example, it can be observed from Figure \ref{fig:office31} that transfer learning between DSLR and Webcam (D-W or W-D) results in low risk and high accuracy; while transfer learning from those domains to Amazon (D-A or W-A) is risky and suffers from low accuracy. Those numerical findings demonstrate the potential of transfer risk as an informative metric for the effectiveness of transfer learning task.

\begin{figure}[H]
    \centering
    \includegraphics[width=0.6\textwidth]{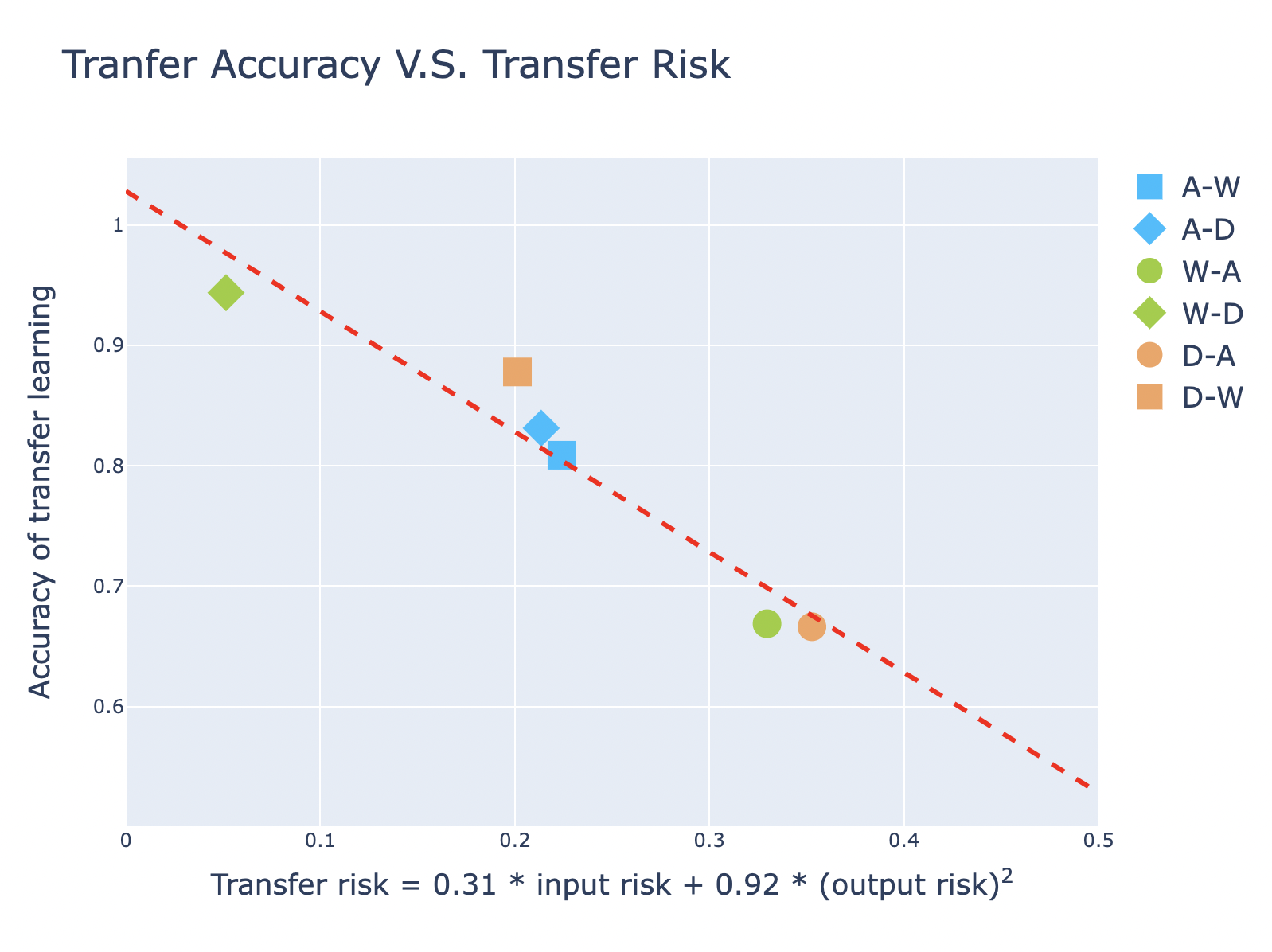}
    \caption{Accuracy and transfer risk.}
    \label{fig:office31}
\end{figure}

\paragraph{Computational benefit of  transfer risk.}  

In this numerical experiment on the Office-31 dataset, assessing transfer risk is computationally efficient and guaranteed by the early-stopping trick in deep learning: for each source-target pair, the optimization problem \eqref{eqn:office_out_risk} is solved by running the gradient descent for a small and fixed number ($\sim$10) of epochs, while the transfer learning problem is solved until the accuracy converges, which may take up to  100 epochs. This early stopping trick is essentially equivalent to shrinking the search space of the output mapping from  $\mathbb{T}^Y$ in \eqref{eqn:nnclass} to some smaller class of neural networks $\mathbb{T}_0^Y\subset\mathbb{T}^Y$. 

Indeed, as emphasized in Section \ref{subsec:def_risk}, computing transfer risk \eqref{eqn:risk} is to solve an optimization problem  over the sets $\mathbb{T}^X_0$ and $\mathbb{T}^Y_0$, which can be much smaller than the function classes $\mathbb{T}^X$ and $\mathbb{T}^Y$ involved in the transfer learning problem \eqref{eq: doub-trans}. This reduction of the function classes demonstrates the potential and benefit of adopting transfer risk for  computational efficiency:  one can first perform the much easier computing task of the transfer risk, and then assess whether or not to resort to the full-scale and more computationally intense form of transfer learning.

\section{Conclusion}
This paper establishes a mathematical framework for transfer learning, and addresses  issues of feasibility and transferability through rigorous
and comprehensive mathematical analysis.  A novel concept of transfer risk is introduced, which not only generalizes existing notions for transferability but also
provides a unified framework for future studies on the impact and implications of various transfer learning techniques, including resizing, cropping, pruning, and distillation.

\newpage

\bibliographystyle{apalike}
\bibliography{refs}

\newpage

\begin{center}
{\Large \bf {\centering Appendix}}
\end{center}

\appendix

\section{Mathematical Proofs}

\subsection{Proof of Theorem \ref{thm:existence}}\label{app:thm-exist}
We will show that the optimization problem \eqref{eq: doub-trans} is well-defined in the sense that an optimal pair of transport mappings $(T^{X,*},T^{Y,*})$ for \eqref{eq: doub-trans} is obtainable, under certain regularity conditions. More specifically, we will focus on the following type of loss function $\LL_T$. 

\begin{defn}[Proper loss function]
   Let $(X,Y)$ be a pair of $\XCal_T\times\YCal_T$-valued random variables with  $Law(X_T,Y_T)\in\mathcal{P}(\XCal_T\times\YCal_T)$. A loss functional $\mathcal{L}_T$ over $A_T$ is said to be {\em proper} with respect to $(X,Y)$ if there exist a corresponding function $L_T:\YCal_T\times\YCal_T\to\mathbb{R}$ bounded from below such that for any $f\in A_T$,
    \[\LL_T(f)=\E[L_T(Y,f(X))]=\E[\E[L_T(Y,f(X))|X]];\]
    moreover, the function $\tilde L_T:\YCal_T\to\mathbb{R}$ given by
    \[\tilde L_T(y)=\E[L_T(Y,Y')|Y'=y],\quad\forall y\in\YCal_T,\]
    is continuous. 
\end{defn}
Examples of proper loss functions include mean squared error and KL-divergence and more generally the Bregman divergence, assuming that the first and second moments of $Y$ conditioned on $Y'=y$ is continuous with respect to $y$. 
 
Without loss of generality,  we shall in this section assume the input transport set $\T^X$ contains all functions from $\XCal_T$ to $\XCal_S$. We then specify the following assumptions for the well-definedness of \eqref{eq: doub-trans}.
\begin{asp}\label{asp: A}
Assume the following regularity conditions hold.
\begin{enumerate}
    \item $\LL_T$ is a proper loss functional with respect to $(X_T,Y_T)$;
    \item the image $f_S^*(\XCal_S)$ is compact in $(\YCal_S,\|\cdot\|_{\YCal_S})$;
    \item the set $\T^Y$ is such that the following set of functions
    \[\tilde\T^Y=\{\tilde T^Y:\XCal_T\to\YCal_T\,|\,\exists T^Y\in\T^Y\text{ s.t. }\tilde T^Y(x)=\inf_{y\in f_S^*(\XCal_S)}\tilde L_T(T^Y(x,y)),\ \ \forall x\in\XCal_T\}\]
    is compact in $(\{f|f:{\XCal_T}\to\YCal_T\},\|\cdot\|_{\infty})$, where for any $f:\XCal_T\to\YCal_T$, $\|f\|_{\infty}:=\sup_{x\in\XCal_T}\|f(x)\|_{\YCal_T}$.
\end{enumerate}
\end{asp}
The proper choice of loss functions for $\mathcal{L}_T$ is fairly general and  includes the mean squared error, the KL-divergence, and more generally the Bregman divergence;  
the compactness assumptions can be fairly flexible as long as  the target optimal model $f_T^*$ can be written as
$f_T^*(x)=T^Y(x,f_S^*(T^X(x))),\quad\forall x\in\XCal_T.$
This compactness condition can be implemented by choosing a particular family of activation functions or imposing boundaries restrictions to weights and biases when constructing machine learning models. 

Now we are ready to prove Theorem \ref{thm:existence} under Assumption \ref{asp: A}.
\begin{proof}[Proof of Theorem \ref{thm:existence}]
Since $\LL_T$ is proper, there exists a function $L_T:\YCal_T\times\YCal_T\to\mathbb{R}$ such that 
\[\inf_{(y,y')\in\YCal_T\times\YCal_T}L_T(y,y')>-\infty,\]
and
\[\LL_T(T^Y(\cdot,(f_S^*\circ T^X)(\cdot)))=\E[L_T(Y_T,T^Y(X_T,(f_S^*\circ T^X)(X_T)))],\quad \forall T^X\in\T^X,\,T^X\in\T^X.\]
Therefore, for the function $\tilde L_T(\cdot)=\E[L_T(Y,Y')|Y'=\cdot]$, there exists $m\in\R$ such that $\tilde L_T(y)\geq m$ for any $y\in\YCal_T$.

Now fix any $T^Y\in\T^Y$. The continuity of $\tilde L_T$ and the continuity of $T^Y(x,\cdot)$ for each $x\in\XCal_T$ guarantee the continuity of $\tilde L_T(T^y(x,\cdot))$. Together with the compactness of $f_S^*(\XCal_S)$, we have that for any $x\in\XCal_T$, 
\[M_x=\argmin_{y\in f_S^*(\XCal_S)}\tilde L_T(T^Y(x,y))\neq\emptyset.\]
Therefore, for any $T^Y$, one can construct $\tilde T^{X}\in\T^X$ such that $\tilde T^X(x)\in M^x_{T^Y}$ for any $x\in\XCal_T$ and 
\[\min_{T^X\in\T^X}\LL_T(T^Y(\cdot,(f_S^*\circ T^X)(\cdot)))=\E[\tilde L_T(\tilde T^Y(X_T))]=:\tilde\LL_T(\tilde T^Y).\]
The continuity of the new loss functional $\tilde\LL_T$ comes from the continuity of the function $\tilde L$, and the particular choice of the function space $(\{f|f:{\XCal_T}\to\YCal_T\},\|\cdot\|_{\infty})$, where $\{f|f:{\XCal_T}\to\YCal_T\}$ contains all functions from $\XCal_T$ to $\YCal_T$. Since $\tilde\T^Y$ is compact in $(\{f|f:{\XCal_T}\to\YCal_T\},\|\cdot\|_{\infty})$, the minimum over $\tilde\T^Y$ is attained at some $\tilde T^{Y,*}$. According to the definition of $\tilde\T^Y$, there exists $T^{Y,*}\in\T^Y$ such that $\tilde T^{Y,*}(\cdot)=\inf_{y\in f_S^*(\XCal_S}T^{Y,*}(\cdot,y)$. Let $T^{X,*}$ be the $\tilde T^X\in\T^X$ corresponding to $T^{Y,*}$. For any $T^X\in\T^X$ and $T^Y\in\T^Y$, we have
\[\begin{aligned}
    \LL_T(T^Y(\cdot,(f_S^*\circ T^X)(\cdot)))&\geq \LL_T(T^Y(\cdot,(f_S^*\circ \tilde T^X)(\cdot)))\\
    &=\tilde \LL_T(\tilde T^Y(\cdot))\geq \tilde\LL_T(\tilde T^{Y,*}(\cdot))\\
    &=\LL_T(T^{Y,*}(\cdot,(f_S^*\circ T^{X,*}))(\cdot))\geq \min_{T^X\in\mathbb{T}^X,T^Y\in\mathbb{T}^Y}\LL_T\left(T^Y(\cdot, (f_S^*\circ T^X)(\cdot))\right).
\end{aligned}\]
Therefore, the transfer learning problem \eqref{eq: doub-trans} is well-defined and it attains its minimum at $(T^{X,*},T^{Y,*})$ described above. 
\end{proof}
If one removes the compactness assumptions in Assumption (A), then a sufficiently rich family of output transport mappings is needed, 
%i.e., $\T^Y=\{f|f:\XCal_T\times\YCal_S\to\YCal_T\}$, 
such that the target optimal model $f_T^*$ can be written as
$f_T^*(x)=T^Y(x,f_S^*(T^X(x))),\quad\forall x\in\XCal_T.$
However, it is often difficult to verify if the set $\T^Y$ is sufficiently rich, due to the construction of neural networks as well as the choices of optimization algorithms.  The compactness conditions, on the other hand, can be implemented through choosing a particular family of activation functions or imposing boundaries restrictions to weights and biases when constructing machine learning models.

\subsection{Properties of Transfer risk}\label{subsec: property}

In this section, the mathematical properties of transfer risk \eqref{eqn:risk} will be studied under mild assumptions. In the following discussion, we will fix a target task $T$ and explore how transfer risk is affected by the choice of source task $S$.

There are two vital pieces of information obtained from the source task $S$ based on the optimization problem in \eqref{eq: doub-trans}, and  transfer risks in \eqref{eqn:risk}. One is the probability distribution of source input $Law(X_S)$, and the other is the pretrained model $f_S^*$ in \eqref{eq: obj-s}  Therefore, we can characterize source task $S$ by $(Law(X_S),f_S^*)$. More specifically, given a  target task $T$ and the source input and output spaces $\XCal_S$ and $\YCal_S$, we can define a corresponding set of pretrained source tasks $\SCal\subset\mathcal{P}(\XCal_S)\times A_S$. Without ambiguity on the target task $T$, we denote $\CCal(S,T)=\CCal(S)=\CCal(\mu,f)$ for any $S=(\mu,f)\in\SCal$. For the set of probability measures over $\XCal_S$, $\mathcal{P}(\XCal_S)$, we can adopt a metric function $D:\mathcal{P}(\XCal_S)\times \mathcal{P}(\XCal_S)\to\R$. Then for the set of functions $A_S$, fix a sufficiently large constant $M>0$ and define the following metric: $\forall f_1,f_2\in A_S,$
\[d_M(f_1,f_2):=\min\{M,\sup_{x\in\XCal_S}\|f_1(x)-f_2(x)\|_{\YCal_S}\}.\]
Then for any $S_1,S_2\in\SCal$ such that $S_1=(\mu_1,f_1)$ and $S_2=(\mu_2,f_2)$, define
\begin{equation}\label{eqn:d_s}
    d_S(S_1,S_2):=D(\mu_1,\mu_2)+d_M(f_1,f_2).
\end{equation}

It is easy to verify that $d_S$ is a metric over $\SCal$. 

In the following discussion on continuity, the next assumption is necessary. Assumption \ref{ass:D} ensures that the choice of input transfer risk is consistent with the metric $d_S$ in \eqref{eqn:d_s} defined between source tasks.

\begin{asp}\label{ass:D}
    For any input transport mapping $T^X_0\in\mathbb{T}^X_0$, assume the input transfer risk $\Ef^I(T^X_0)$ take the form  $\Ef^I(T^X_0):=D(T^X_0\#Law(X_T),Law(X_S))$, where $D:\mathcal{P}(\XCal_S)\times \mathcal{P}(\XCal_S)\to\R$ is the distance function appearing in \eqref{eqn:d_s}.
\end{asp}

By definition, the following degenerate case holds immediately.
\begin{prop}[Zero transfer risk]
    Suppose $\XCal_T=\XCal_S$, $\YCal_T=\YCal_S$ and the target task $T\in\SCal$. Then $\CCal(T)=0$.
\end{prop}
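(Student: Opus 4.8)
The plan is to verify the two defining conditions of transfer risk from Definition~\ref{defn: trans-bene} directly in the degenerate case. Suppose $\XCal_T=\XCal_S$, $\YCal_T=\YCal_S$, and $T=(Law(X_T),f_T^*)\in\SCal$; that is, we take the source task to literally coincide with the target task. First I would exhibit a particular intermediate model in $\ICal$ that achieves zero model-specific transfer risk. Since $\XCal_S=\XCal_T$, the identity map $id^X$ belongs to $\T^X$ (by the last clause of Definition~\ref{def:tl}), and since $\YCal_S=\YCal_T$, the identity map $id^Y$ belongs to $\T^Y$; I would take these as the initial transport mappings $T^X_0=id^X$ and $T^Y_0=id^Y$, so that the associated intermediate model is $f_{ST}(x)=id^Y(x,(f_S^*\circ id^X)(x))=f_S^*(x)=f_T^*(x)$, using that the pretrained source model $f_S^*=f_T^*$ when $S=T$.

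Next I would evaluate the two component risks at this choice. For the output transport risk: since $f_{ST}=f_T^*$, we have $\pbm_{ST}=Law(f_{ST}(X_T))=Law(f_T^*(X_T))=\pbm_T$, so by property (2) of Definition~\ref{defn:outputrisk}, $\Ef^O(f_{ST})=0$. For the input transport risk: under Assumption~\ref{ass:D}, $\Ef^I(T^X_0)=D(id^X\#Law(X_T),Law(X_S))=D(Law(X_T),Law(X_S))$, and since $Law(X_S)=Law(X_T)$ (again because $S=T$) and $D$ is a metric, this equals $D(Law(X_T),Law(X_T))=0$; alternatively one invokes property (2) of Definition~\ref{defn: inputrisk} directly, noting $id^X\#Law(X_T)=Law(X_T)=Law(X_S)$. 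Hence both arguments of $C$ vanish, and by the normalization $C(0,0)=0$ in Definition~\ref{defn: trans-bene}, the model-specific transfer risk satisfies $\CCal(T\,|\,f_{ST})=C(\Ef^O(f_{ST}),\Ef^I(T^X_0))=C(0,0)=0$.

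Finally I would conclude via the infimum in \eqref{eqn:risk}: since $\CCal(S,T\,|\,f_{ST})\ge 0$ for every intermediate model by hypothesis, and we have produced one intermediate model in $\ICal$ attaining the value $0$, it follows that $\CCal(T)=\CCal(T,T)=\inf_{f_{ST}\in\ICal}\CCal(T,T\,|\,f_{ST})=0$. There is no serious obstacle here — the statement is essentially a consistency check on the definitions — but the one point that warrants care is making sure the relevant identity transports actually lie in the \emph{reduced} sets $\T^X_0$ and $\T^Y_0$ (not merely in $\T^X,\T^Y$), since the transfer risk is defined over $\ICal$ built from $\T^X_0,\T^Y_0$; I would either state this as part of the hypothesis (the degenerate setting presumes the initial transport classes are rich enough to contain the identities) or note that Assumption~\ref{ass:D} together with the convention in Definition~\ref{def:tl} makes this automatic.
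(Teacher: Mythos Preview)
Your proposal is correct and follows exactly the route the paper intends: the paper states only that the result ``holds immediately'' by definition, and your argument is precisely the unpacking of that claim via the identity transports and the normalization $C(0,0)=0$. Your caveat about needing $id^X\in\T^X_0$ and $id^Y\in\T^Y_0$ is well observed and is indeed an implicit assumption in the paper's degenerate setting.
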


Now, we consider source tasks $S_1,S_2\in\SCal$ that differ only in the input distribution, i.e., $S_f^1=(\mu_1,f)$ and $S_f^2=(\mu_2,f)$. Then we have the following continuity property for $\CCal$.
\begin{prop}[Continuity in input distribution]\label{lem: cont-distr}
    Assume Assumption \ref{ass:D}. Fix $f\in A_S$. $\CCal(\cdot, f)$ is continuous on $(\mathcal{P}(\XCal_S),D)$.
\end{prop}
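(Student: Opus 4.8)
The plan is to reduce the continuity of $\CCal(\cdot,f)$ to the continuity of the input transfer risk $\Ef^I$ in the input distribution, using the structure of the model-specific transfer risk provided by Definition \ref{defn: trans-bene}. First I would observe that when the source task varies only through its input distribution, with the pretrained model $f$ held fixed, the intermediate model set $\ICal$ is unchanged: $\ICal$ is determined by $f=f_S^*$ and the fixed transport classes $\T^X_0,\T^Y_0$, none of which depend on $\mu=Law(X_S)$. Consequently, for each fixed $f_{ST}\in\ICal$ the output distribution $\pbm_{ST}=Law(f_{ST}(X_T))$ is also independent of $\mu$, and hence so is the output transfer risk $\Ef^O(f_{ST})$ (note $\pbm_T$ likewise depends only on the fixed target task $T$). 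So the only $\mu$-dependence in $\CCal(S_f,T)=\inf_{f_{ST}\in\ICal}C\bigl(\Ef^O(f_{ST}),\Ef^I(T^X_0)\bigr)$ enters through $\Ef^I(T^X_0)$.

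Next I would invoke Assumption \ref{ass:D}, which gives $\Ef^I(T^X_0)=D(T^X_0\#Law(X_T),\mu)$. Fix $\mu_1,\mu_2\in\mathcal{P}(\XCal_S)$ and, for any $f_{ST}=T^Y_0(\cdot,(f\circ T^X_0)(\cdot))\in\ICal$ with underlying $T^X_0$, write $a:=T^X_0\#Law(X_T)$. By the triangle inequality for the metric $D$,
\[
\bigl|D(a,\mu_1)-D(a,\mu_2)\bigr|\le D(\mu_1,\mu_2).
\]
Then the Lipschitz property (item 2 of Definition \ref{defn: trans-bene}), applied with the two source tasks $S_f^1=(\mu_1,f)$ and $S_f^2=(\mu_2,f)$ sharing the same $\ICal$ and the same intermediate model $f_{ST}$ (so that the output-risk difference vanishes), yields
\[
\bigl|\CCal(S_f^1,T\mid f_{ST})-\CCal(S_f^2,T\mid f_{ST})\bigr|\le L\bigl|\Ef^I_{(1)}(T^X_0)-\Ef^I_{(2)}(T^X_0)\bigr|\le L\,D(\mu_1,\mu_2),
\]
where the bound is uniform over $f_{ST}\in\ICal$. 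Taking the infimum over $f_{ST}$ on both sides (using that $|\inf_i u_i-\inf_i v_i|\le\sup_i|u_i-v_i|$) gives $|\CCal(\mu_1,f)-\CCal(\mu_2,f)|\le L\,D(\mu_1,\mu_2)$, which is in fact Lipschitz continuity and a fortiori continuity on $(\mathcal{P}(\XCal_S),D)$.

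The main obstacle — really the only subtlety — is the bookkeeping needed to apply the Lipschitz axiom of Definition \ref{defn: trans-bene} correctly: that axiom is stated for a generic ``other'' transfer problem $(\bar S,\bar T,\dots)$ and a generic $\bar f_{ST}\in\bar\ICal$, so I must justify that taking $\bar S=S_f^2$, $\bar T=T$, the same transport classes, and $\bar f_{ST}=f_{ST}$ is a legitimate instantiation, and that under this choice $|\Ef^O(f_{ST})-\Ef^O(\bar f_{ST})|=0$ because the output risk genuinely does not see $\mu$. This rests on the first step's observation that $\ICal$, $\pbm_{ST}$, and $\pbm_T$ are all $\mu$-independent, which should be spelled out explicitly. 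A minor point is that if one only wants continuity rather than Lipschitz continuity, it suffices to assume $D$ is continuous in its second argument and use item 1 (monotonicity) plus $C(0,0)=0$-type continuity of $C$; but the triangle-inequality route above is cleaner and gives the stronger conclusion directly.
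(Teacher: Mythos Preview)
Your proposal is correct and uses essentially the same ingredients as the paper's proof: the triangle inequality for the metric $D$ (via Assumption~\ref{ass:D}), the Lipschitz axiom in Definition~\ref{defn: trans-bene}, and the fact that only the input transfer risk depends on $\mu$ while $\ICal$, $\pbm_{ST}$, and $\Ef^O(f_{ST})$ do not. The paper packages the argument as a separate $\epsilon$--$\delta$ verification of upper and lower semi-continuity, picking near-optimal intermediate models for each direction; you instead obtain the uniform bound $|\CCal(\mu_1,f\mid f_{ST})-\CCal(\mu_2,f\mid f_{ST})|\le L\,D(\mu_1,\mu_2)$ and pass to the infimum via $|\inf u-\inf v|\le\sup|u-v|$, which is slightly more direct and in fact yields global Lipschitz continuity rather than mere continuity. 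Both routes are equivalent here; your explicit justification that $\ICal$ and $\Ef^O$ are $\mu$-independent is a welcome clarification that the paper leaves implicit.
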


\begin{proof}[Proof of Proposition \ref{lem: cont-distr}]
Fix an arbitrary $\epsilon>0$. Take any $\mu\in\mathcal{P}(\XCal_S)$.
Then we first establish the lower semi-continuity: For any $T^X_0\in\T_0^X$ and $T_0^Y\in\T_0^Y$, let $f_{I}$ denote the corresponding intermediate model from source model $f$. By Definition \ref{defn: trans-bene}, we have
\[\CCal(\mu,f)-\frac{1}{2}\epsilon<\CCal(\mu,f|f_{I}).\]
By triangle inequality of $D$ and the Lipschitz property of $\CCal(\mu,f|f_I)$, take $\delta=\frac{\epsilon}{2L}$ for any $\mu'\in B_\delta(\mu)\subset \mathcal{P}(\XCal_S)$,
\[\CCal(\mu,f|f_{I})\leq \CCal(\mu',f|f_{I})+L\delta.\]
Notice that the choice of $\delta$ is independent of $T_0^X$ and $T_0^Y$. Therefore, 
\[\CCal(\mu,f)-\epsilon<\CCal(\mu',f;f_{I})\Rightarrow \CCal(\mu,f)-\epsilon<\CCal(\mu',f).\]

Now we show the upper semi-continuity.  From the infimum nature of $\CCal$, there exists $\bar T_0^X\in\T_0^X$ and $\bar T_0^Y\in\T_0^Y$, with corresponding intermediate model $\bar f_I$, such that 
\[\CCal(\mu,f|\bar f_{I})<\CCal(\mu,f)+\frac{1}{2}\epsilon.\]
Again, by triangle inequality of $D$ and the Lipschitz property of $\CCal(\mu,f|f_I)$, take $\delta=\frac{\epsilon}{2L}$ for any $\mu'\in B_\delta(\mu)\subset \mathcal{P}(\XCal_S)$,
\[\CCal(\mu,f|\bar f_{I})\geq \CCal(\mu',f|\bar f_{I})-\delta.\]
Then we have 
\[\CCal(\mu',f)\leq \CCal(\mu',f|\bar f_{I})<\CCal(\mu,f)+\epsilon.\]
Hence, we conclude that $\CCal(\cdot, f)$ is continuous on $(\mathcal{P}(\XCal_S),D)$.
\end{proof}

This proposition shows that transfer risk will change continuously along with any modification in source input. Its proof indicates that the sensitivity of transfer risk with respect to the change in source input distribution depends on the Lipschitz constant $L$ of $\CCal$. Therefore, one can modulate this sensitivity by carefully designing the $C$ function in Definition \ref{defn: trans-bene}. For instance, for linear transfer risk $\CCal^\lambda$ in \eqref{eq: risk-ln}, the sensitivity can be controlled by varying the value of $\lambda$.

Next, consider source tasks $S_1,S_2\in\SCal$ that differ only in the pretrained model, i.e., $S_\mu^1=(\mu,f_1)$ and $S_\mu^2=(\mu,f_2)$. Then we have  the robustness of the transferability in terms of  the continuity of $\CCal(\mu,\cdot)$ in pretrained model $f\in(A_S,d_M)$. 

\begin{prop}[Continuity in pretrained model]\label{lem: cont-f} 
Assume Assumption \ref{ass:D}, and assume that there exists a constant $L>0$ such that for any $T_0^Y\in\T_0^Y$, 
\[T_0^Y(x_1,y_1)-T_0^Y(x_2,y_2)\leq L\left(\|x_1-x_2\|_{\XCal_T}+\|y_1-y_2\|_{\YCal_S}\right),\]
for all $(x_1,y_1),(x_2,y_2)\in\XCal_T\times\YCal_S$.
Assume also that there exist  some $L'>0$ and $p\geq1$ such that the output transfer risk satisfies
    \[\left|\Ef^O(h_1)-\Ef^O(h_2)\right|\leq L'\mathcal{W}_p(h_1\#Law(X_T),h_2\#Law(X_T))^p\]
    for all $h_1,h_2\in\mathcal I$. 
    Then $\CCal(\mu,\cdot)$ is continuous on $(A_S,d_M)$ for any fixed $\mu\in\mathcal{P}(\XCal_S)$.
\end{prop}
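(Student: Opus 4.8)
The plan is to mimic the structure of the proof of Proposition~\ref{lem: cont-distr}, establishing lower and upper semi-continuity of $\CCal(\mu,\cdot)$ separately, but now tracking how a perturbation in the pretrained model $f\in(A_S,d_M)$ propagates through the intermediate model and then into the output transfer risk via the two new Lipschitz hypotheses. First I would fix $\mu\in\mathcal{P}(\XCal_S)$, fix $f_1,f_2\in A_S$ with $d_M(f_1,f_2)<\delta$ (for $\delta$ to be chosen, and assuming $\delta<M$ so that $d_M(f_1,f_2)=\sup_{x}\|f_1(x)-f_2(x)\|_{\YCal_S}$), and fix an arbitrary pair of initial transports $T^X_0\in\T^X_0$, $T^Y_0\in\T^Y_0$, with associated intermediate models $f_I^{(1)}(x)=T^Y_0(x,f_1(T^X_0(x)))$ and $f_I^{(2)}(x)=T^Y_0(x,f_2(T^X_0(x)))$ built from the same transports but different source models. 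The input transport risk $\Ef^I(T^X_0)$ is unchanged since it depends only on $T^X_0$ and $Law(X_T)$, $Law(X_S)$.

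The core estimate is a chain of three Lipschitz bounds. Using the Lipschitz hypothesis on $T^Y_0$ with $x_1=x_2=x$, I get $\|f_I^{(1)}(x)-f_I^{(2)}(x)\|_{\YCal_T}\leq L\|f_1(T^X_0(x))-f_2(T^X_0(x))\|_{\YCal_S}\leq L\, d_M(f_1,f_2)$, uniformly in $x$. Hence the pushforward measures satisfy $\mathcal{W}_p(f_I^{(1)}\#Law(X_T),f_I^{(2)}\#Law(X_T))^p\leq (L\,d_M(f_1,f_2))^p$, using the trivial coupling $(f_I^{(1)}(X_T),f_I^{(2)}(X_T))$ and the fact that a uniform sup bound on the displacement dominates the $L^p$ transport cost. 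Feeding this into the output-transfer-risk Lipschitz hypothesis gives $|\Ef^O(f_I^{(1)})-\Ef^O(f_I^{(2)})|\leq L'(L\,d_M(f_1,f_2))^p$. Finally, the Lipschitz property of the model-specific transfer risk from Definition~\ref{defn: trans-bene} (its second property, comparing two transfer problems that here differ only in the source model, so $\Ef^I$ terms cancel) yields
\[
|\CCal(\mu,f_1|f_I^{(1)})-\CCal(\mu,f_2|f_I^{(2)})|\leq L_C\,|\Ef^O(f_I^{(1)})-\Ef^O(f_I^{(2)})|\leq L_C L' L^p\, d_M(f_1,f_2)^p,
\]
where $L_C$ denotes the Lipschitz constant of $\CCal$. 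Crucially the resulting bound is independent of the choice of $T^X_0,T^Y_0$.

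With this uniform-in-$(T^X_0,T^Y_0)$ estimate in hand, the semi-continuity argument is the same as in Proposition~\ref{lem: cont-distr}. For lower semi-continuity: given $\epsilon>0$, pick $(T^X_0,T^Y_0)$ with $\CCal(\mu,f_1)>\CCal(\mu,f_1|f_I^{(1)})-\epsilon/2$; then for $d_M(f_1,f_2)$ small enough that $L_C L' L^p d_M(f_1,f_2)^p<\epsilon/2$, $\CCal(\mu,f_1)-\epsilon<\CCal(\mu,f_2|f_I^{(2)})$, and taking the infimum over intermediate models of the second task gives $\CCal(\mu,f_1)-\epsilon<\CCal(\mu,f_2)$. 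For upper semi-continuity: pick $(\bar T^X_0,\bar T^Y_0)$ nearly achieving the infimum defining $\CCal(\mu,f_1)$, transfer the same transports to the perturbed model, and use the estimate in the other direction to get $\CCal(\mu,f_2)\leq\CCal(\mu,f_2|\bar f_I^{(2)})<\CCal(\mu,f_1|\bar f_I^{(1)})+\epsilon/2+\epsilon/2<\CCal(\mu,f_1)+\epsilon$. Combining the two directions and swapping roles of $f_1,f_2$ where needed gives continuity of $\CCal(\mu,\cdot)$ on $(A_S,d_M)$.

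The main obstacle is purely bookkeeping around the truncation in $d_M$: one must restrict attention to $d_M(f_1,f_2)<M$ so that $d_M$ agrees with the genuine sup-norm distance and the displacement bound $\|f_I^{(1)}(x)-f_I^{(2)}(x)\|_{\YCal_T}\leq L\,d_M(f_1,f_2)$ is valid — this is harmless for a continuity statement since continuity is a local property and one only ever needs $f_2$ in a small $d_M$-ball around $f_1$. A secondary point to state carefully is that the hypotheses implicitly require $f_I^{(1)},f_I^{(2)}\in\ICal$ (so that $\Ef^O$ is defined on them) and that their pushforwards lie in $\mathcal{P}_p(\YCal_T)$ so the Wasserstein-$p$ distance is finite; both follow from the uniform boundedness forced by composing with $T^Y_0$ and from $f_I^{(i)}$ being constructed from admissible initial transports. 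No genuinely new difficulty arises beyond these; the argument is a direct adaptation of the preceding proposition with the single-link "triangle inequality of $D$" replaced by the three-link Lipschitz chain above.
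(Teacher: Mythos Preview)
Your proposal is correct and follows essentially the same approach as the paper: establish a uniform bound $|\Ef^O(f_I^{(1)})-\Ef^O(f_I^{(2)})|\leq C\,d_M(f_1,f_2)^p$ by chaining the Lipschitz hypothesis on $T_0^Y$, a coupling bound on $\mathcal{W}_p$, and the Lipschitz hypothesis on $\Ef^O$, then conclude via the same lower/upper semi-continuity scheme as in Proposition~\ref{lem: cont-distr}. Your use of the diagonal coupling $(f_I^{(1)}(X_T),f_I^{(2)}(X_T))$ is in fact cleaner than the paper's general-coupling computation (which ends up with an unnecessary $2^{p-1}$ factor), and your remarks on the $d_M<M$ localization are a welcome bit of care the paper omits.
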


\begin{proof}[Proof of Proposition \ref{lem: cont-f}]
Take any $T_0^X\in\T_0^X$ and $T_0^Y\in\T_0^Y$. For any $f_1,f_2\in(A_S,d_M)$, denote their corresponding intermediate model as $f_I^1$ and $f_I^2$, respectively. Then we have
\[\begin{aligned}
|\Ef^O(f_I^1)-\Ef^O(f_I^2)|&\leq L'W_p(f_I^1\#Law(X_T),f_I^2\#Law(X_T))^p\\
&=L'\inf_{\pi\in\Pi(f_I^1\#Law(X_T),f_I^2\#Law(X_T))}\int_{\YCal_T\times\YCal_T}\|x-y\|_{\YCal_T}^p\pi(dx,dy)\\
&\leq L'\inf_{\gamma\in\Pi(Law(X_T),Law(X_T))}\int_{\XCal_T\times\XCal_T}\|T_0^Y(x,f_1(T_0^X(x)))-T_0^Y(y,f_2(T_0^X(y)))\|_{\YCal_T}^2\pi(dx,dy)\\
&\leq 2^{p-1}L^pL'\left[\inf_{\gamma\in\Pi(Law(X_T),Law(X_T))}\int_{\XCal_T\times\XCal_T}\|x-y\|_{\XCal_T}^pd\pi(dx,dy)+d_M(f_1,f_2)^p\right]\\
&=2^{p-1}L^pL'\left[W_p(Law(X_T),Law(X_T))^p+d_M(f_1,f_2)^p\right]=2^{p-1}L^pd_M(f_1,f_2)^p.
\end{aligned}\]
The rest of the proof is similar to that of Proposition \ref{lem: cont-distr}.
\end{proof}

This proposition shows that transfer risk will change continuously along with the modification in the pretrained model. As seen from the proof, the sensitivity of transfer risk with respect to the change in pretrained model is determined by three factors: (1) the Lipschitz constant inherited from the $C$ function in Definition \ref{defn: trans-bene}, (2) the choice of output transport risk $\Ef^O$, and (3) the family of output transport mappings $\T_0^Y$. In practice, one may control the sensitivity of the transfer risk through careful choices of those quantities.

%In practice, the structure of $\T_0^Y$ is determined by the setup of the transfer learning problem, while the design of $C$ function and the selection of appropriate output risk $\Ef^O$ can be adjusted in the evaluation of transferability. 

Propositions \ref{lem: cont-distr} and \ref{lem: cont-f} lead to the following results.
\begin{prop}
    \label{thm: cont}
    Suppose the conditions in Proposition \ref{lem: cont-f} hold. Then the transfer risk $\CCal$ as in Definition \ref{defn: trans-bene} is continuous on $(\SCal, d_S)$.
\end{prop}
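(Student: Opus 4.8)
\textbf{Proof proposal for Proposition \ref{thm: cont}.}
The plan is to combine the two partial continuity results, Proposition \ref{lem: cont-distr} (continuity in the input distribution for a fixed pretrained model) and Proposition \ref{lem: cont-f} (continuity in the pretrained model for a fixed input distribution), into joint continuity on the product-type metric space $(\SCal, d_S)$ with $d_S$ given by \eqref{eqn:d_s}. The key observation is that $d_S(S_1, S_2) = D(\mu_1,\mu_2) + d_M(f_1,f_2)$ decomposes additively into a ``distribution part'' and a ``model part'', so convergence $S_n \to S$ in $d_S$ is equivalent to $D(\mu_n,\mu) \to 0$ \emph{and} $d_M(f_n, f) \to 0$ simultaneously. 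I would therefore pass through an intermediate source task to separate the two perturbations.

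First I would fix $S = (\mu, f) \in \SCal$ and an arbitrary $\epsilon > 0$, and take any $S' = (\mu', f') \in \SCal$. Introduce the intermediate task $S'' := (\mu', f)$, which shares the pretrained model with $S$ and the input distribution with $S'$. Then write the triangle-type bound
\[
|\CCal(S') - \CCal(S)| \leq |\CCal(\mu', f') - \CCal(\mu', f)| + |\CCal(\mu', f) - \CCal(\mu, f)|.
\]
For the second term, Proposition \ref{lem: cont-distr} gives continuity of $\CCal(\cdot, f)$ in $D$, so there is $\delta_1 > 0$ (depending only on $\epsilon$, $f$, and the Lipschitz constant $L$) such that $D(\mu',\mu) < \delta_1$ forces $|\CCal(\mu',f) - \CCal(\mu,f)| < \epsilon/2$. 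For the first term, the estimate established in the proof of Proposition \ref{lem: cont-f} is in fact uniform in the base measure: the chain of inequalities there bounds $|\Ef^O(f_I^1) - \Ef^O(f_I^2)| \leq 2^{p-1} L^p L'\, d_M(f_1,f_2)^p$ with no dependence on which input distribution is used, and then the Lipschitz property of $\CCal(\cdot\,|\,\cdot)$ from Definition \ref{defn: trans-bene} together with the infimum structure (exactly as in the proof of Proposition \ref{lem: cont-distr}) converts this into $|\CCal(\mu', f') - \CCal(\mu', f)| < \epsilon/2$ whenever $d_M(f', f) < \delta_2$ for some $\delta_2 > 0$ depending only on $\epsilon$, $L$, $L'$, $p$. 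Choosing $\delta = \min\{\delta_1, \delta_2\}$, whenever $d_S(S', S) < \delta$ we have both $D(\mu',\mu) < \delta_1$ and $d_M(f',f) < \delta_2$, hence $|\CCal(S') - \CCal(S)| < \epsilon$. This establishes continuity of $\CCal$ at $S$, and since $S$ was arbitrary, on all of $(\SCal, d_S)$.

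The main obstacle I anticipate is not the triangle-inequality bookkeeping but verifying that the continuity modulus in Proposition \ref{lem: cont-f} is genuinely independent of the input distribution $\mu$ (equivalently of the task's $Law(X_T)$, which is fixed here, but of the source marginal $\mu$ that enters $\CCal$ through $\Ef^I$). One must check that the bound $|\CCal(\mu', f') - \CCal(\mu', f)|$ produced by the Proposition \ref{lem: cont-f} argument does not secretly depend on $\mu'$ — it does not, because the $\Ef^O$ estimate only involves pushforwards of $Law(X_T)$ (the target input, unchanged) and the metric $d_M$ on models, while the $\Ef^I$ term is identical for both $S''=(\mu',f)$ and $S'=(\mu',f')$ and so cancels under the Lipschitz/infimum step. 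Once that uniformity is confirmed, the two $\delta$'s can be chosen independently and the rest is routine. A minor point to state carefully is that $d_S$ is indeed a metric (already asserted in the excerpt) and that $d_M \leq M$ is bounded, so convergence in $d_S$ really is componentwise convergence; I would note this explicitly before starting the $\epsilon$–$\delta$ argument.
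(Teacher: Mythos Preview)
Your proposal is correct and matches the paper's approach: the paper simply asserts that Proposition \ref{thm: cont} follows from Propositions \ref{lem: cont-distr} and \ref{lem: cont-f}, and your triangle-inequality splitting through the intermediate task $(\mu',f)$, together with the observation that the $\Ef^O$ bound in Proposition \ref{lem: cont-f} is uniform in $\mu$, is precisely the natural way to fill in that step.
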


Propositions \ref{lem: cont-distr} -- \ref{thm: cont} reveals that under a given target task, transfer risk is continuously influenced by both the changes in the source input and the pretrained model. Therefore, transfer risk is to evaluate the suitability of performing transfer learning and the  appropriate choice of given source tasks for a  target task.

\section{Tranfer Risk and Regret with Gaussian Data}\label{app:gaussian}
In this section, we will revisit the example in Section \ref{subsec:gaussian}. The proof of Proposition \ref{prop: lb} will also be presented in this section. In the following discussion, for any spaces $\XCal$ and $\YCal$, we use the notation $\YCal^{\XCal}$ to denote the set of all the functions from $\XCal$ to $\YCal$.

More specifically, consider a transfer learning problem in linear regression where the source and target data are sampled from two Gaussian distributions respectively. 

\subsection{Basic case}
Let us first focus on the case where both data sources are of the same dimension. For the source task $S$, the input and the output spaces are $\XCal_S=\R^d$ and $\YCal_S=\R$, respectively. The source data $(X_S,Y_S)\in\XCal_S\times\YCal_S$ is Gaussian distributed such that $(X_S,Y_S)\sim N(\mu_S,\Sigma_S)$ with
\begin{equation}\label{eq: source-data-distrn}
\mu_S=\begin{pmatrix}\mu_{S,X}\\\mu_{S,Y}\end{pmatrix},\quad \Sigma_S=\begin{pmatrix}\Sigma_{S,X}&\Sigma_{S,XY}\\\Sigma_{S,YX}&\Sigma_{S,Y}\end{pmatrix},
\end{equation}
where
$\mu_{S,Y}\text{ and }\Sigma_{S,Y}\in\R$, $\mu_{S,X}\text{ and }\Sigma_{S,XY}\in\R^d$, $\Sigma_{S,YX}=\Sigma_{S,XY}^\top$, and $\Sigma_{S,X}\in\R^{d\times d}$. Take the set of admissible source models $A_S$ to be the set of functions $f:\R^d\mapsto\R$. For any $f\in A_S$, let the source loss function be 
\begin{equation}\label{eq: lr-loss}
    \LL_S(f)=\E\|Y_S-f(X_S)\|_2^2.
\end{equation}
Then the optimal source model 
\begin{equation}\label{eq: lr-opt-prob}
f_S^*\in\argmin_{f\in A_S}\LL_S(f)
\end{equation}
is given by 
\begin{equation}\label{eq: source-opt-model}
    f_S^*(x)=w_S^\top x+b_s,
\end{equation}
 where
 \begin{equation}\label{eq: pretrn-model}
        w_S=\Sigma_{S,X}^{-1}\Sigma_{S,XY}\in\R^d,\quad b_S=\mu_{S,Y}-\Sigma_{S,YX}\Sigma_{S,X}^{-1}\mu_{S,X}\in\R.
    \end{equation}
    
Such$f_S^*$ is then used as the pretrained model for the following target task $T$, where the target input and output spaces are the same as in the source task, $\XCal_T=\XCal_S$ and $\YCal_T=\YCal_S$. The target data $(X_T,Y_T)\in\XCal_T\times\YCal_T$ follows a different Gaussian distribution from that in the source data such that $(X_T,Y_T)\sim N(\mu_T,\Sigma_T)$, with
\begin{equation}\label{eq: target-data-distrn}
    \mu_T=\begin{pmatrix}\mu_{T,X}\\\mu_{T,Y}\end{pmatrix},\quad \Sigma_T=\begin{pmatrix}\Sigma_{T,X}&\Sigma_{T,XY}\\\Sigma_{T,YX}&\Sigma_{T,Y}\end{pmatrix},
\end{equation}
where $\mu_{T,Y}\text{ and }\Sigma_{T,Y}\in\R$, $\mu_{T,X}\text{ and }\Sigma_{T,XY}\in\R^d$, $\Sigma_{T,YX}=\Sigma_{T,XY}^\top$, and $\Sigma_{T,X}\in\R^{d\times d}$. 

The set of admissible target models is the same as the in the source task such that $A_T=A_S$. For any $f\in A_T$, let the target loss function be $\LL_T(f)=\E\|Y_T-f(X_T)\|_2^2$. Then similarly to the source task, the optimal target model $f_T^*$ is given by
\begin{equation}\label{eq: opt-target-model}
    f_T^*(x)=w_T^\top x+b_T,\quad \forall x\in\R^d,
\end{equation}
where
\begin{equation}\label{eq: target-opt-param}
    w_T=\Sigma_{T,X}^{-1}\Sigma_{T,XY},\quad b_T=\mu_{T,Y}-\Sigma_{T,YX}\Sigma_{T,X}^{-1}\mu_{T,X}.
\end{equation}
The corresponding output distribution is then given by
\begin{equation}\label{eq: target-pred}
\pbm_T=\E[Y|X]=N(w_T^\top\mu_{T,X}+b_T,w_T^\top\Sigma_{T,X}w_T)=N(\mu_T,w_T^\top\Sigma_{T,X}w_T).
\end{equation}

To initiate transfer learning from the source task $S$ to the target task $T$, consider the sets of input and output transport mappings $\T^X=\{f|f:\XCal_T\to\XCal_S\}$ and $\T^Y=\{f|f:\XCal_T\times\YCal_S\to\YCal_T\}$, with corresponding sets of initial transport mappings $\T^X_0=\{id_{\XCal_T}\}$, $\T^Y_0=\{id_{\YCal_S}\}$. Then the set of intermediate models $\ICal$ is a singleton with $\ICal=\{f_{ST}:f_{ST}=f_S^*\}$.

Given the optimal models in both the source task and the target task, specified by \eqref{eq: source-opt-model}-\eqref{eq: pretrn-model} and \eqref{eq: opt-target-model}-\eqref{eq: target-opt-param}, since the data distribution in the target task is given by \eqref{eq: target-data-distrn}, we have
\begin{equation}\label{eq: tl-init-distrn}
        \pbm_{ST}=f_S^*\# N(\mu_{T,X},\Sigma_{T,X})=N(w_S^\top\mu_{T,X}+b_S, w_S^\top\Sigma_{T,X}w_S).
\end{equation}
Notice that $\pbm_T\ll\pbm_{ST}$, therefore the Lebesgue decomposition leads to 
    $\pbm_T=\tilde\pbm_T$, such that
\begin{equation}
    \label{eq: leb-decomp}
    \frac{d\tilde\pbm_T(y)}{d\pbm_{ST}(y)}=h_{ST}(y)=\sqrt\frac{w_S^\top\Sigma_{T,X}w_S}{w_T^\top\Sigma_{T,X}w_T}\exp\left\{\frac{[y-(w_S^\top\mu_{T,X}+b_S)]^2}{2w_S^\top\Sigma_{T,X}w_S}-\frac{[y-(w_T^\top\mu_{T,X}+b)]^2}{w_T^\top\Sigma_{T,X}w_T}\right\}.
\end{equation}
Direct computation leads to the following result.
\begin{itemize}
\item  The KL-based output transfer risk is given by
    \[\begin{aligned}
        \Ef^O_{KL}(f_{ST})=&\frac{1}{2}\left\{\frac{\Sigma_{T,YX}\Sigma_{T,X}^{-1}\Sigma_{T,XY}}{\Sigma_{S,YX}\Sigma_{S,X}^{-1}\Sigma_{T,X}\Sigma_{S,X}^{-1}\Sigma_{S,XY}}-\log{\frac{\Sigma_{T,YX}\Sigma_{T,X}^{-1}\Sigma_{T,XY}}{\Sigma_{S,YX}\Sigma_{S,X}^{-1}\Sigma_{T,X}\Sigma_{S,X}^{-1}\Sigma_{S,XY}}}-1\right.\\
        &\hspace{35pt}\left.{}+\frac{\left[\mu_{T,Y}-\mu_{S,Y}-\Sigma_{S,YX}\Sigma_{S,X}^{-1}\left(\mu_{T,X}-\mu_{S,X}\right)\right]^2}{\Sigma_{S,YX}\Sigma_{S,X}^{-1}\Sigma_{T,X}\Sigma_{S,X}^{-1}\Sigma_{S,XY}}\right\}.
    \end{aligned}\]
\item  The Wasserstein-based output transfer risk is given by
    \[\begin{aligned}
        \Ef^O_{W}(f_{ST})&=\left[\mu_{T,Y}-\mu_{S,Y}-\Sigma_{S,YX}\Sigma_{S,X}^{-1}\left(\mu_{T,X}-\mu_{S,X}\right)\right]^2\\
        &+\left(\sqrt{\Sigma_{S,YX}\Sigma_{S,X}^{-1}\Sigma_{T,X}\Sigma_{S,X}^{-1}\Sigma_{S,XY}}-\sqrt{\Sigma_{T,YX}\Sigma_{T,X}^{-1}\Sigma_{T,XY}}\right)^2.
    \end{aligned}\]
\end{itemize}
The computation shows that
\begin{itemize}
\item The risk in transfer learning is due to the discrepancy in the data distributions between source and target tasks, even when the source and target data are of matching dimensions and follow the same family of distributions. 

\item In particular, in both the KL and the Wasserstein cases, the output transfer risk can be decomposed into two parts, one being  the variance terms $error_{v,\cdot}$ determined by the covariance matrices of the source and target data, and the other being the bias terms $error_{b,\cdot}$ dependent on the difference between  the expectations of $\mu_T$ and $\mu_S$.  

To see this, write
\begin{align}
        \Ef^O_{KL}(f_{ST})&=error_{v,KL}(S,T)+error_{b,KL}(S,T),\label{eq: kl-decomp}\\
        \Ef^O_{W}(f_{ST})&=error_{v,W}(S,T)+error_{b,W}(S,T),\label{eq: w-decomp}
\end{align}
where
\[\begin{aligned}
    & error_{v,KL}(S,T)=\frac{1}{2}\left(\frac{\Sigma_{T,YX}\Sigma_{T,X}^{-1}\Sigma_{T,XY}}{\Sigma_{S,YX}\Sigma_{S,X}^{-1}\Sigma_{T,X}\Sigma_{S,X}^{-1}\Sigma_{S,XY}}-\log{\frac{\Sigma_{T,YX}\Sigma_{T,X}^{-1}\Sigma_{T,XY}}{\Sigma_{S,YX}\Sigma_{S,X}^{-1}\Sigma_{T,X}\Sigma_{S,X}^{-1}\Sigma_{S,XY}}}-1\right),\\
    & error_{b,KL}(S,T)=\frac{\left[\mu_{T,Y}-\mu_{S,Y}-\Sigma_{S,YX}\Sigma_{S,X}^{-1}\left(\mu_{T,X}-\mu_{S,X}\right)\right]^2}{2\Sigma_{S,YX}\Sigma_{S,X}^{-1}\Sigma_{T,X}\Sigma_{S,X}^{-1}\Sigma_{S,XY}};\\
    & error_{v,W}(S,T)=\left(\sqrt{\Sigma_{S,YX}\Sigma_{S,X}^{-1}\Sigma_{T,X}\Sigma_{S,X}^{-1}\Sigma_{S,XY}}-\sqrt{\Sigma_{T,YX}\Sigma_{T,X}^{-1}\Sigma_{T,XY}}\right)^2,\\
    & error_{b,W}(S,T)=\left[\mu_{T,Y}-\mu_{S,Y}-\Sigma_{S,YX}\Sigma_{S,X}^{-1}\left(\mu_{T,X}-\mu_{S,X}\right)\right]^2.
\end{aligned}\]

\item The KL-based variance term 
\[error_{v,KL}=h\left(\frac{\Sigma_{T,YX}\Sigma_{T,X}^{-1}\Sigma_{T,XY}}{\Sigma_{S,YX}\Sigma_{S,X}^{-1}\Sigma_{T,X}\Sigma_{S,X}^{-1}\Sigma_{S,XY}}\right),\]
with the function $h:(0,\infty)\to\R$ such that $h(x)=\frac{1}{2}(x-\log x-1)$ for any $x>0$, which is strictly convex and reaches its minimum value $0$ at $x=1$. Thus,  for both the KL- and the Wasserstein-based output transfer risks, their variance risk components vanish if and only if
    \[\Sigma_{T,YX}\Sigma_{T,X}^{-1}\Sigma_{T,XY}=\Sigma_{S,YX}\Sigma_{S,X}^{-1}\Sigma_{T,X}\Sigma_{S,X}^{-1}\Sigma_{S,XY}.\]
    
\item The bias risk components $error_{b,KL}(S,T)$ and $error_{b,W}(S,T)$ remain strictly positive unless the weighted difference between the expectations $\mu_T$ and $\mu_S$ is $0$.
%\end{rmk}
\end{itemize}

\paragraph{Regret analysis.}
By direct computation, one can show that the regret \eqref{eqn:regret} for this linear transfer leaning problem is given by
\begin{equation}\label{eqn:regret_explicit}
    \RCal(S,T)=\|\Sigma^{\frac{1}{2}}(w_T-w_S)\|_2^2+\left[\mu_{T,Y}-\mu_{S,Y}-\Sigma_{S,YX}\Sigma_{S,X}^{-1}\left(\mu_{T,X}-\mu_{S,X}\right)\right]^2.
\end{equation}
We denote the first term in \eqref{eqn:regret_explicit} as $\hat{error}_v(S,T):=\|\Sigma^{\frac{1}{2}}(w_T-w_S)\|_2^2$, and denote the second term in \eqref{eqn:regret_explicit} as $\hat{error}_b(S,T):=\left[\mu_{T,Y}-\mu_{S,Y}-\Sigma_{S,YX}\Sigma_{S,X}^{-1}\left(\mu_{T,X}-\mu_{S,X}\right)\right]^2$.

Recall from \eqref{eqn:linear_risk} that the Wasserstein-based transfer risk for this problem is defined as $\CCal_{W}(S,T)=\Ef^O_{W}(f_{ST})$ in \eqref{eq: w-decomp}. Meanwhile, it can be easily verified by comparing \eqref{eq: w-decomp} and \eqref{eqn:regret_explicit} that 
\begin{equation}\label{eqn:compare_risk_regret}
    \RCal(S,T)=\CCal_{W}(S,T)+2\left(\|\Sigma_{T,X}^{1/2}w_T\|_2\|\Sigma_{T,X}^{1/2}w_S\|_2-\langle\Sigma_{T,X}^{1/2}w_T,\Sigma_{T,X}^{1/2}w_S\rangle\right).
\end{equation}
Proposition \ref{prop: lb} is an immediate consequence of \eqref{eqn:compare_risk_regret} and the Cauchy–Schwarz inequality.

\begin{rmk}
    Proposition \ref{prop: lb} suggests that for evaluating a transfer learning scheme as in \eqref{eq: tl-fw}, transfer risk provides a proper initial indication of its effectiveness, especially when eliminating unsuitable candidate pretrained models or source tasks if the transfer risk is large. Further examining the decomposition of the transfer $\CCal_{KL}$ and $\CCal_{W}$ as well as the regret $\RCal$, we notice that
    \begin{itemize}
        \item A vanishing bias term in transfer risks is equivalent to a vanishing bias term in regret, i.e., $\hat{error}_{b}(S,T)=0\Longleftrightarrow error_{b,KL}(S,T)=error_{b,W}(S,T)=0$
        \item A vanishing variance term in transfer risk is necessary for a vanishing variance term in regret, i.e., $\hat{error}_{v}(S,T)=0\Longrightarrow error_{v,KL}(S,T)=error_{v,W}(S,T)=0$. 
        \item The residual term $2\left(\|\Sigma_{T,X}^{\frac{1}{2}}w_T\|_2\|\Sigma_{T,X}^{\frac{1}{2}}w_S\|_2-\langle\Sigma_{T,X}^{\frac{1}{2}}w_T,\Sigma_{T,X}^{\frac{1}{2}}w_S\rangle\right)$ in \eqref{eqn:compare_risk_regret} depends entirely on the source and target covariance matrices $\Sigma_S$ and $\Sigma_T$is due to the variance term in the learning objective difference. Therefore, when $\CCal_{W}(S,T)=0$ (or $\CCal_{KL}(S,T)=0$), the training process is to reduce the angular distance between $\Sigma_{T,X}^{1/2}w_S$ and $\Sigma_{T,X}^{1/2}w_T$ caused by the discrepancy in these two covariance matrices.
    \end{itemize}
\end{rmk}

\subsection{Case with feature augmentation} Let us now consider the  case with feature augmentation. That is, compared with the input data in the source task, the target task includes more input information in the form of a higher input dimension. 
We will see that potential extra transfer risk as a result of the extra augmented  input information as well as its benefit to eliminate the bias risk.

Take the same source task $S$ as in the basic case; for the target task $T$, let the input space  $\XCal_T=\R^{d+k}$ with $k\in\mathbb{N}^+$, let  the output space be the same as in the target task $T$ such that $\YCal_T=\YCal_S=\R$. Since the transfer learning problem has a feature augmentation, let us first define a projection  $\XCal_T$ from $T^X_0$ to $\XCal_S$ such that 
\[T^X_0(x)=\begin{pmatrix}I_d&\vdots&O_{d\times k}\end{pmatrix}x,\quad\forall x\in\XCal_T.\]
Then the target data $(X_T,Y_T)\in\XCal_T\times\YCal_T$ satisfies that
$T^X_0(X_T)=X_S$ and $Y_T=Y_S$. That is, $(X_T,Y_T)$ is given by a Gaussian distribution $N(\mu_T,\Sigma_T)$ with $\mu_T$ and $\Sigma_T$ in the same form as in \eqref{eq: target-data-distrn}, where
\[
\mu_{T,X}=\begin{pmatrix}\mu_{S,X}\\\mu_{A,X}\end{pmatrix},\,  \mu_{T,Y}=\mu_{S,Y};\ \ 
    \Sigma_{T,X}=\begin{pmatrix}\Sigma_{S,X} & \Sigma_{AS,X}\\\Sigma_{AS,X}^\top & \Sigma_{A,X}\end{pmatrix},\, \Sigma_{T,XY}=\begin{pmatrix}\Sigma_{S,XY}\\\Sigma_{A,XY}\end{pmatrix},\, \Sigma_{T,Y}=\Sigma_{S,Y}.
\]
Here $\mu_{A,X}\in\R^k$ denotes the expectation of the augmented variable $\tilde X_T$ such that $X_T^\top=\begin{pmatrix}
    T^X_0(X_T)^\top&\tilde X_T^\top
\end{pmatrix}$; in the above covariance matrix $\Sigma_T$, $\Sigma_{AS,X}=Cov(X_S,\tilde X_T)\in\R^{d\times k}$, $\Sigma_{A,X}=Var(\tilde X_T)\in\R^{k\times k}$, and $\Sigma_{A,XY}=Cov(\tilde X_T,Y_T)\in\R^{k}$.
The optimal linear model $f_T^*$ is again given by \eqref{eq: opt-target-model}-\eqref{eq: target-opt-param} with the optimal parameters $w_T$ and $b_T$ re-computed under the above modified target data distribution. The corresponding output distribution $\pbm_T$ is of the form \eqref{eq: target-pred} with updated parameters as in $f_T^*$.

To initialize the transfer learning problem from $S$ to $T$, consider $\T^X_0=\{T^X_0\}$, $\T^Y_0=\{id_{\YCal_S}\}$, $\T^X=\{f|f:{\XCal_T}\to\XCal_S\}$, and $\T^Y=\{f|f:\XCal_T\times\YCal_S\to\YCal_T\}$. The set of intermediate models $\ICal$ is still singleton, with $\ICal=\{f_{ST}:f_{ST}=f_S^*\circ T^X_0\}$. Clearly,  $\pbm_{ST}=Law(f_S^*(X_S))$, with $\pbm_T\ll Law(f_S^(X_S))$.
Now we have
  \begin{itemize}
  \item The KL-based output transfer risks are given by 
    \begin{equation}\label{eq: kl-or-ia}
        \Ef^O_{KL}(f_{ST})=\frac{1}{2}\left[\frac{\Sigma_{T,YX}\Sigma_{T,X}^{-1}\Sigma_{T,XY}}{\Sigma_{S,YX}\Sigma_{S,X}^{-1}\Sigma_{S,XY}}-\log{\frac{\Sigma_{T,YX}\Sigma_{T,X}^{-1}\Sigma_{T,XY}}{\Sigma_{S,YX}\Sigma_{S,X}^{-1}\Sigma_{S,XY}}}-1\right];
\end{equation}
\item The  Wasserstein-based output transfer risk is 
\begin{equation}
        \Ef^O_{W}(f_{ST})=\left(\sqrt{\Sigma_{T,YX}\Sigma_{T,X}^{-1}\Sigma_{T,XY}}-\sqrt{\Sigma_{S,YX}\Sigma_{S,X}^{-1}\Sigma_{S,XY}}\right)^2.
    \end{equation}
\end{itemize}

Comparing the basic case and  this feature augmentation case, we see
\begin{itemize}
\item The extra input information enables the particular choice of the initial input and output transport mappings, $T^X_0$ and $id_{\YCal_S}$, which in turn  eliminates the bias risk component in both the KL- and Wasserstein-based output risk. 
\item Both output transfer risks come from their corresponding variance risk component. Take the KL-based output transfer risk in \eqref{eq: kl-or-ia} as an example. We see that 
\[\Ef^O_{KL}(f_{ST})=error_{v,KL}(S,T)=h\left(\frac{\Sigma_{T,YX}\Sigma_{T,X}^{-1}\Sigma_{T,XY}}{\Sigma_{S,YX}\Sigma_{S,X}^{-1}\Sigma_{S,XY}}\right).\]
This suggests that the challenge of applying transfer learning with feature augmentation lies mainly at the uncertainty from the augmented variable $\tilde X_T$. 
\item In particular, if one assumes that the added input information $\tilde X_T$ is uncorrelated with the existing input data $X_S$, then 
\[\frac{\Sigma_{T,YX}\Sigma_{T,X}^{-1}\Sigma_{T,XY}}{\Sigma_{S,YX}\Sigma_{S,X}^{-1}\Sigma_{S,XY}}=1+\frac{\Sigma_{A,YX}\Sigma_{A,X}^{-1}\Sigma_{A,XY}}{\Sigma_{S,YX}\Sigma_{S,X}^{-1}\Sigma_{S,XY}}.\]
That is, when one introduces new features that are uncorrelated with the existing ones, the variance risk component is always positive unless these new features are also uncorrelated with the output.

\end{itemize}

\subsection{Case with augmented output space}
Let us now consider the case with an extra prediction task, i.e.,  a transfer learning problem with augmented output space. 
In this case, we will  see extra transfer risk with two major contributing factors, one being the unforeseeable correlation between the input and the extra output information and the other being the necessary initialization procedure due to the extra task in the target task. 

To see this, let us consider a source task  slightly modified from the basic case, where the output space in $S$ is allowed be of dimension bigger that $1$, that is, $\YCal_S=\R^l$ with $l\in\mathbb{N}^+$. Then the source data $(X_S,Y_S)$  is given by a Gaussian distribution $N(\mu_S,\Sigma_S)$ with $\mu_S$ and $\Sigma_S$ as in \eqref{eq: source-data-distrn} except that $\mu_{S,Y}\in\R^l$, $\Sigma_{S,XY}\in\R^{d\times l}$ and $\Sigma_{S,Y}\in\R^{l\times l}$. Again the optimal linear model $f_S^*$  is given by \eqref{eq: source-opt-model}-\eqref{eq: pretrn-model} with optimal parameters $w_S$ and $b_S$ re-computed under the above modified source data distribution. 

For the target task, let $\XCal_T=\XCal_S$ and $\YCal_T=\R^{l+k}$ with $k\in\mathbb{N}^+$. Since the transfer learning problem has an extra learning task with the same input data, the target data $(X_T,Y_T)\in\XCal_T\times\YCal_T$ satisfies that $X_T=X_S$ and $Y_T^\top=\begin{pmatrix}Y_S&Y_A\end{pmatrix}^\top$ from some random variable $Y_A\in\R^K$. Let us assume that the joint distribution of the input and output variables $(X_T,Y_T)$ follows a Gaussian distribution $N(\mu_T,\Sigma_T)$ with $\mu_T$ and $\Sigma_T$ in the same form as in \eqref{eq: target-data-distrn}, where
\[\begin{aligned}
    &\mu_{T,Y}=\begin{pmatrix}\mu_{S,Y}\\\mu_{A,Y}\end{pmatrix},\quad \mu_{T,X}=\mu_{S,X};\\
    &\Sigma_{T,Y}=\begin{pmatrix}\Sigma_{S,Y} & \Sigma_{AS,Y}\\\Sigma_{AS,Y}^\top & \Sigma_{A,Y}\end{pmatrix},\quad \Sigma_{T,YX}=\begin{pmatrix}\Sigma_{S,YX} \\ \Sigma_{A,YX}\end{pmatrix}, \quad \Sigma_{T,XY}=\begin{pmatrix}\Sigma_{S,XY} & \Sigma_{A,XY}\end{pmatrix},\quad \Sigma_{T,X}=\Sigma_{S,X}.
\end{aligned}\]
Here $\mu_{A,Y}=\E[Y_{A}] \in\R^k$ denotes the expectation of $Y_A$, $\Sigma_{AS,Y}=Cov(Y_S,Y_A)\in\R^{l\times k}$, $\Sigma_{A,Y}=Var(Y_A)\in\R^{k\times k}$ and $\Sigma_{A,XY}=\Sigma_{A,YX}^\top=Cov(X_S,Y_A)\in\R^{d\times k}$. 
Then again the optimal linear model $f_T^*$ is in the form of \eqref{eq: opt-target-model}
with parameters
\[\begin{aligned}
    &w_T^\top=\Sigma_{T,YX}\Sigma_{T,X}^{-1}=\begin{pmatrix}\Sigma_{S,YX}\Sigma_{S,X}^{-1}\\\Sigma_{A,YX}\Sigma_{S,X}^{-1}\end{pmatrix}=\begin{pmatrix}w_S^\top\\\Sigma_{A,YX}\Sigma_{S,X}^{-1}\end{pmatrix},\\ &b_T=\mu_{T,Y}-\Sigma_{T,YX}\Sigma_{T,X}^{-1}\mu_{T,X}=\begin{pmatrix}\mu_{S,Y}\\\mu_{A,Y}\end{pmatrix}-\begin{pmatrix}\Sigma_{S,YX}\Sigma_{S,X}^{-1}\mu_{S,X}\\\Sigma_{A,YX}\Sigma_{S,X}^{-1}\mu_{S,X}\end{pmatrix}=\begin{pmatrix}b_S\\\mu_{A,Y}-\Sigma_{A,YX}\Sigma_{S,X}^{-1}\mu_{S,X}\end{pmatrix}.\end{aligned}\]
Correspondingly, $\pbm_T=\E[Y_T|X_T]=N\left(\mu_1,\Sigma_1\right)$, where
\[\begin{aligned}
\mu_1=w_T^\top\mu_{T,X}+b_T=\begin{pmatrix}w_S^\top\mu_{S,X}+b_S\\\mu_{A,Y}\end{pmatrix},\ \
\Sigma_1=w_T^\top\Sigma_{T,X}w_T=\begin{pmatrix}w_S^\top\Sigma_{S,X}w_S & w_S^\top\Sigma_{A,XY}\\\Sigma_{A,YX}w_S & \Sigma_{A,YX}\Sigma_{S,X}^{-1}\Sigma_{A,XY}\end{pmatrix}.
\end{aligned}\]

To initialize the transfer learning, consider the sets of input and output transport mappings $\T^X=\{f|f:\XCal_T\to\XCal_S\}$ and $\T^Y=\{f|f:\XCal_T\times\YCal_S\to\YCal_T\}$, as well as the sets of initial input transport mappings $\T^X_0=\{id_{\XCal_T}\}$. For the initial output mapping, in order to handle the newly added prediction task from $X_T=X_S$ to $Y_A$, let us define an initial function $f_0:\R^d\to\R^k$ as $f_0(x)=w_0^\top x+b_0$ for any $x\in\R^d$ with fixed $w_0\in\R^{k\times d}$ and $b_0\in\R^k$. The set of initial output transport mappings is given by $\T^Y_0=\{T^Y_0:\XCal_T\times\YCal_S\to\YCal_T|T^Y_0(x,y)=\begin{pmatrix}y^\top&\vdots&f_0(x)^\top\end{pmatrix}^\top\}$. Once again, the set of intermediate models $\ICal$ is a singleton with $\ICal=\{f_{ST}: f_{ST}(x)=T^Y_0(x,f_S^*(x)),\,\forall x\in\XCal_T\}$.
The probability distribution $\pbm_{ST}$ of the intermediate model $f_{ST}$ is given by $\pbm_{ST}=N(\mu_2,\Sigma_2)$, where
\[\begin{aligned}    &\mu_2=\begin{pmatrix}w_S^\top\\w_0^\top\end{pmatrix}\mu_{T,X}+\begin{pmatrix}b_S\\b_0\end{pmatrix}=\begin{pmatrix}w_S^\top\mu_{S,X}+b_S\\w_0^\top\mu_{S,X}+b_0\end{pmatrix},\\
    &\Sigma_2=\begin{pmatrix}w_S^\top\\w_0^\top\end{pmatrix}\Sigma_{T,X}\begin{pmatrix}w_S & w_0\end{pmatrix}=\begin{pmatrix}w_S^\top\Sigma_{S,X}w_S& w_S^\top\Sigma_{S,X}w_0\\w_0^\top\Sigma_{S,X}w_S & w_0^\top\Sigma_{S,X}w_0\end{pmatrix}.
\end{aligned}\]
We have again $\pbm_T\ll\pbm_{ST}$, and 
\begin{itemize}
 \item    The KL- 
 and Wasserstein-based output transfer risks are given by 
    \begin{equation}
        \Ef^O_{KL}(f_{ST})=\frac{1}{2}\left[Tr(\Sigma_2^{-1}\Sigma_1)-\log{\frac{\text{det}(\Sigma_1)}{\text{det}(\Sigma_2)}}-(k+l)+(\mu_1-\mu_2)^\top\Sigma_2^{-1}(\mu_1-\mu_2)\right];\label{eq: kl-or-oa}
        \end{equation}
      \item The  Wasserstein-based output transfer risk is  given by
      \begin{equation}
        \Ef^O_{W}(f_{ST})=\|\mu_1-\mu_2\|_2^2+Tr\left(\Sigma_1+\Sigma_2-2\left(\Sigma_1^{\frac{1}{2}}\Sigma_2\Sigma_1^{\frac{1}{2}}\right)^{\frac{1}{2}}\right).
    \end{equation}
\end{itemize}
The analysis shows that with the augmented output space, the output transfer risks vanish if the initialization function $f_0$ can neutralize the uncertainty brought by the correlation between the input $X_S$ and the additional output information $Y_A$. 

To see this, take the example of the KL-based output transfer risk in \eqref{eq: kl-or-oa}, and decompose $\Ef^O_{KL}(f_{ST})$ in \eqref{eq: kl-or-oa} into its variance and bias components as in \eqref{eq: kl-decomp}, with
\[\begin{aligned}
    & error_{v,KL}(S,T)=\frac{1}{2}Tr(\Sigma_2^{-1}\Sigma_1)-\log{\frac{\text{det}(\Sigma_1)}{\text{det}(\Sigma_2)}}-(k+l),\\
    & error_{b,KL}(S,T)=\frac{1}{2}(\mu_1-\mu_2)^\top\Sigma_2^{-1}(\mu_1-\mu_2).
\end{aligned}\]
Now, we see that
\begin{itemize}
    \item If $0<\lambda_1\leq\dots\leq\lambda_{k+l}$ are the eigenvalues of $\Sigma_2^{-1}\Sigma_1$, and if $\Sigma_1$ and $\Sigma_2$ are invertible. Then the variance term can be written as
    \[
        error_v(S,T)=\sum_{i=1}^{k+l}(\lambda_i-\log\lambda_i-1)\geq0, 
    \]
    which vanishes if and only if $\lambda_i$'s are all equal to $1$ such that $\Sigma_1=\Sigma_2$.

    \item The difference between the expectations of $\pbm_T$ and $\pbm_{ST}$  is given by
    \[\mu_1-\mu_2=\begin{pmatrix}0\\\mu_{A,Y}-w_0^\top\mu_{S,X}-b_0\end{pmatrix}.\]
    Therefore, the error between the expected augmented output $\mu_{A,Y}$ and $w_0^\top\mu_{S,X}+b_0$ derived from the chosen initialization $f_0$ is the main contributor to a strictly positive bias risk component $error_{b,KL}$. 
\end{itemize}

\end{document}